\let\cref\Cref
\DeclareAcronym{fca}{
  short = FCA,
  long = Formal Concept Analysis
}
\begin{document}

\title{What is the \emph{intrinsic} dimension of your binary data? --- and how to compute it quickly}
\titlerunning{ID of your binary data}

\author{%
  Tom Hanika\inst{1}\orcidlink{0000-0001-7813-9799} \and
  Tobias Hille\inst{2,3}\orcidlink{0000-0002-4918-6374}
}

\authorrunning{T. Hanika \& T. Hille}

\institute{
  Intelligent Information Systems, University of Hildesheim, Hildesheim, Germany\and
  Knowledge \& Data Engineering Group, University of Kassel, Kassel, Germany\and
  Interdisciplinary Research Center for Information System Design\\
  University of Kassel, Kassel, Germany\\
  \email{tom.hanika@uni-hildesheim.de, hille@cs.uni-kassel.de}\\
}
\maketitle              
\begin{abstract}
  Dimensionality is an important aspect for analyzing and understanding
  (high-dimensional) data. In their 2006 ICDM paper Tatti et al.\ answered the
  question for a (interpretable) dimension of binary data tables by introducing
  a normalized correlation dimension. In the present work we revisit their results
  and contrast them with a concept based notion of intrinsic dimension (ID)
  recently introduced for geometric data sets. To do this, we present a novel
  approximation for this ID that is based on computing concepts only
  up to a certain support value. We demonstrate and evaluate our approximation
  using all available datasets from Tatti et al., which have between 469
  and 41271 extrinsic dimensions.
\keywords{intrinsic dimension  \and high-dimensional data \and binary data \and
  extrinsic dimension \and PCA}
\end{abstract}

\section{Introduction}
\label{sec:introduction}
A deeper understanding of the dimensionality of a (binary) data set helps with many tasks related to data analysis. Possible examples are comparing data sets,  dimension reduction (e.g., by selecting features), or estimating the information actually available or necessary for machine learning procedures. 

The two basic notions of dimensionality are extrinsic and intrinsic dimension. The former can be understood as the number information pieces that we have at hand. For example, the number of columns in a binary data table (BD). However, not all these pieces may contribute equally or at all to the analysis of the data. The intrinsic dimensionality (ID) on the other hand shall reflect the number of required information pieces that are needed to describe each object in the data set. It is important to understand that the ID is therefore directly dependent on the analytical methods used. For example, one may analyze the objects of a BD using (artificially introduced) distance functions~\cite{tatti2007distances,li2006unified}. This would entail a different ID than the analysis using statistical measures~\cite{Chavez} or formal concepts~\cite{kuznetsov2023formal}.

A new perspective on the notion of the intrinsic dimension was recently presented by Hanika et al.~\cite{Hanika2022IntrinsicDO}. The approach chosen there for geometric data sets is universal in that it allows for metric, conceptual, and other features that can be represented by 1-Lipschitz functions. It is based on an axiomatic approach which is based on theoretical work by V. Pestov~\cite{Pestov1,Pestov2}. The main advantage of the new approach to ID is that it actually uses the formal concepts as a measurement tool for determining the intrinsic dimension. This novel ID has already found practical application in the field of FCA, especially for comparing data sets from the same domain~\cite{kuznetsov2023formal}. However, this novelty and the calculation effort required to determine this ID pose a problem.

We therefore want to address the following challenges in this paper. The ID requires to compute all formal concept, as they are the measurement tools. Computing these on high-dimensional and large data sets might pose a problem. We thus investigate theoretically to which extent the ID can be estimated using concepts having some minimum support $s\in[0,1]$. Consecutively, we evaluate experimentally this estimation based on classical item set mining data sets of medium and large dimension. In particular, we use all available data sets from Tatti et al.~\cite{Tatti2006DimensionWhatIT}. To this end, we describe an algorithm that efficiently computes the ID for a given set of concepts and a data set. 
We then revisit the results by Tatti et al. on the (normalized) correlation dimension and contrast them with our ID results. Specifically, we can show that the ID captures other aspects of the data than the correlation dimension.

All in all, with our work we present a computationally feasible approach for the intrinsic dimension of binary datasets that a) does not artificially introduce a distance function on the dataset and b) uses the formal concepts from FCA as a measurement tool. This confirms the previous application results~\cite{kuznetsov2023formal,Hanika2022IntrinsicDO,StubbemannHS23} for this ID and enables its broad use in the future.

\section{Related Work}
There is an abundance of works discussing the intrinsic dimension of data sets. There rather simple approaches, such as Chavez et al.~\cite{Chavez}, that employ comparatively simple statistics.
More sophisticated works link the intrinsic dimension to the mathematical phenomenon of measure concentration~\cite{Pestov1,Pestov2}. From this Hanika et al.\ derived an axiomatization which resulted into a practically computable dimension function~\cite{Hanika2022IntrinsicDO}. Since their result also allows to employ formal concepts for probing data sets, we will build our work on it. 
A slightly different approach is followed by Carter~\cite{DBLP:journals/tsp/CarterRH10}, who proposed to consider a local variant of the intrinsic dimension. Sutton et al.~\cite{DBLP:journals/corr/abs-2311-07579}, on the other hand, propose a relative intrinsic dimension between data distributions.

All works above (with exception for Hanika et al.) have in common that they require the data set in question to have a (meaningful) metric function. Formal context, naturally, do not exhibit such a metric. Although it can artificially introduced, e.g., using Hamming distances, their meaningfulness is debatable. 

For the special case of dimension considerations for binary data
tables, the research literature is very limited. Liu et
al.~\cite{liu2016comparison} propose an algorithmic approach for the
dimensionality assessment of categorical item responses. For this they
employ different notions of multidimensional clustering.

Notwithstanding this, there is some work on dimension reduction in
formal concept analysis. Belohlavek and
Trnecka~\cite{DBLP:journals/jcss/BelohlavekT15} propose different
geometrically motivated Boolean matrix factorization
techniques~\cite{DBLP:journals/isci/BelohlavekOT18,DBLP:journals/kbs/TrneckaT18}. Bartl
et al. provides an in-depth comparison the effects of different
factorization methods on the
dimensionality~\cite{DBLP:conf/chdd/BartlBOR12}.  Buzmakov et
al.~\cite{DBLP:journals/ijar/BuzmakovDKMN24} consider significant fca
notions, such as minimal generators, pseudo-intents, etc, to measure
data complexity.
Finally, and most prominently, is the work by Tatti et al.~\cite{Tatti2006DimensionWhatIT}. Their initial question for ``What is the dimension of your binary data?'' is the starting point for this work on the intrinsic dimension of binary data tables.

\section{Approaches for dimension measurement}
As the related work indicates, there is a plenitude of approaches for
dimensionality. In this work, we provide an in depth comparison of the
\emph{correlation dimension} by Tatti et
al.~\cite{Tatti2006DimensionWhatIT} and the intrinsic dimension of
geometric data sets~\cite{Hanika2022IntrinsicDO}. The former is an
instance of a fractal dimension and uses distances in an Euclidean
space in which the data points of the binary table are
represented. The latter is a non-metric approach which employs the
formal concepts themselves as measuring instruments, as we will show.

\subsection{Data Structures For Binary Data Tables}
There are several ways to build up a data structure for binary data tables.
Tatti et al.~\cite{Tatti2006DimensionWhatIT} used a plain notion, which is an
instance of a matrix $D\in{\{0, 1\}}^{g\times m}$, where $g,m\in\mathbb{N}$. In
this work we represent data of this form by incidence structures, which are
thoroughly investigated in \ac{fca}. Although we will refrain from giving a
detailed introduction and refer the reader to Ganter and
Wille~\cite{Ganter2012FormalCA}, we recall the basic notions of \ac{fca}
relevant to this work. A triple $\mathbb{K}=(G, M, I)$ with non-empty sets $G$
and $M$ and a relation $I\subseteq G\times M$ is called a \textit{formal
  context}. The elements of $G$ are called \textit{objects}, the ones of $M$
\textit{attributes} of $\mathbb{K}$. We call a formal context \textit{finite} if
both $G$ and $M$ are finite. For $A\subseteq G$ and $B\subseteq M$ define
$A':=\{b\in M\mid\forall a\in A:
(a, b)\in I\}$ and $B':=\{a\in G\mid\forall b\in B:(a, b)\in I\}$.
A central notion in \ac{fca} is that of the \textit{formal concepts} of
$\mathbb{K}$, defined as
$\mathcal{B}(\mathbb{K}):=\{(A, B)\mid A\subseteq G, B\subseteq M, A'=B, B'=A\}$.


Denote for $n\in\mathbb{N}$ the set $\{1, \dots, n\}=[n]$. We can naturally
represent a binary data table as a formal context $\mathbb{K}=(G, M, I)$ through
setting $G=[g], M=[m]$ and $I=\{(a, b)\mid \forall a\in G, b\in M: D_{a, b} =
1\}$. For $a\in[g]$ let a \textit{data point} (e.g., a row) of $D$ be
$\mathbf{b}=[D_{a, 1},\dots,D_{a, m}]\in\{0,1\}^m$ and vice versa for $b\in[m]$
let a \textit{feature} (e.g., a column) of $D$ be $\mathbf{a}=[D_{1,
  b},\dots,D_{g,b}]\in\{0,1\}^g$. Then for $A=\{a\}$ a data point $\mathbf{a}$
corresponds to $A'$ (and likewise for $B=\{b\}$ a feature $\mathbf{b}$ to $B'$).
We define the set of data points to be $\mathcal{P}(D)=\{\mathbf{a}\mid
a\in[g]\}$ and call $g$ the \emph{length} of the binary data table.


\subsection{Correlation Dimension}%
\label{sec:approaches:correlation_dimension}
Most definitions of (intrinsic) dimension build upon the idea of measuring a set
in a way that ignores irregularities below a certain scale and observe the
change in behavior when the scale shrinks. This is particularly evident in the
\emph{fractal dimension} and the related \emph{box counting method}
where an scaling exponent is estimated~\cite{Falconer1990FractalGM}. Tatti et
al.~\cite{Tatti2006DimensionWhatIT} have applied these notions, which were
previously defined for real-valued metric spaces, to the special case of binary
data tables.

Given a binary data table $D\in{\{0, 1\}}^{g\times m}$, $Z_D$ denotes the random
variable whose value is the $L_1$ distance between two randomly chosen data
points from $D$. As $0\leq Z_D\leq m$, we can obtain the probability of $Z_D$
being less than a chosen value $r\leq m$ by counting all those pairs that are
close enough, namely:
\begin{align*}
  \mathbb{P}[Z_D< r]
  &= \frac{1}{|D|^2}\big|\{(x, y)\mid x,y\in\mathcal{P}(D): |x-y| < r\}\big|
\end{align*}
The exact calculation is quadratically dependent on the length of $D$. An easy
way to approximate $\mathbb{P}[Z_D<r]$ is by choosing a random subset of rows
$\mathcal{P}_s(D)\subset\mathcal{P}(D)$ and calculate all pairs of elements with
one or both elements in $D_s$:
\begin{align*}
  \mathbb{P}[Z_D< r]
  &\simeq \frac{1}{|D||D_s|}\big|\{(x, y)
    \mid x\in\mathcal{P}(D), y\in\mathcal{P}_s(D): |x-y| < r\}\big| \\
  \mathbb{P}[Z_D< r]
    &\simeq \frac{1}{|D_s|^2}\big|\{(x, y)
      \mid x,y\in\mathcal{P}(D): |x-y| < r\}\big|
\end{align*}
The cumulative probability function of $Z_D$ is a discontinuous step function in
any of the above forms, namely:
$f^\star:\mathbb{N}\rightarrow\mathbb{R}, r\mapsto f(r)=\mathbb{P}[Z_D < r]$

Its extension to real numbers can be obtained~\cite{Tatti2006DimensionWhatIT} by
interpolating nearby support points:
\begin{align*}
  f(r)& =\begin{cases}
         1 & \textit{if } m < r \\
         0 & \textit{if } r < 0 \\
           (r - \lfloor r \rfloor) f^\star(\lfloor r \rfloor)
           + (\lceil r \rceil - r) f^\star(\lceil r \rceil) & \textit{else}
       \end{cases}
\end{align*}
In order to define the correlation dimension, the function $f$ is used to set
set $\mathcal{I}(D, r_1, r_2, N) \coloneqq
  \{(\log r, \log f(r))\mid r = r_1 + \frac{i}{N} (r_2 - r_1), i\in\{0,\dots,N\}\}$,
in which bounds on the distances $r_1, r_2$ and number of sample points
$N\in\mathbb{N}$ determine for which magnitudes the data set is measured.
Similarly to the original work we will omit $N$ in the following for the sake of
brevity.

If for a given $r$ the number of pairs of points within distance $d$
grows as $r^d$ increases, the point set $\mathcal{I}(D, r_1, r_2)$
appears as a straight line with $d$ as its slope. The underlying
assumption $\mathbb{P}[Z_D < r]\propto r^d$ may not be fulfilled for
all data sets, however, Tatti\,et\,al.\ define the \emph{correlation
  dimension} as the factor minimizing the least-square error between
$\log r$ and $\log f(r)$. We formalize this by
\begin{align*}
  \text{cd}_R(D; r_1, r_2) :=
  \operatorname*{arg\,min}_b
  \sum_{(\log r, \log f(r))\in \mathcal{I}(D, r_1, r_2)}
  (b\log r - \log f(r))^2.
\end{align*}
For comparing data sets with different number of features it is helpful to use
the possibility of reformulating the correlation dimension based on thresholds
$\alpha_i\in[0, 1]$, $i\in\{1, 2\}$ by setting $r_i = \max\{f^{-1}(\alpha_i),
1\}$:
\begin{align*}
  \mathcal{I}(D, \alpha_1, \alpha_2)=\mathcal{I}(D, r_1, r_2)\quad\text{and}\quad
  \text{cd}_A(D; \alpha_1, \alpha_2)\coloneqq\text{cd}_R(D; r_1, r_2)
\end{align*}
The authors state that the truncation of the $r_i$ is necessary for algorithmic
stability when working with sparse data sets.

\subsubsection{Normalized Correlation Dimension}
The original authors~\cite{Tatti2006DimensionWhatIT} state that the
above definition does not produce very intuitive results. To help with
comparing different data sets the following normalization was
proposed. Its goal is to determine the smallest number of independent
random Bernoulli variables of a given probability, for which samples
with same marginal probability are indistinguishable from a BD table
$D$ when considering the correlation dimension.

Consider the vector $p=(p_1,\dots,p_m)$ of probabilities where
$p_b=\frac{1}{g}\sum_{a=1}^{g}D_{a,b}$ for each $b\in[m]$. Let us denote by
$\text{ind}(D)$ a random variable constructed by combining $m$ independent
random Bernoulli variables, each with mean $p_b$ and call it the
\textit{independent data set over $D$}. Further consider $h\in\mathbb{N}$ and
$q\in[0, 1]$ and use $h$ independent random Bernoulli variables, all with mean
$q$, to construct another random variable. We will refer to this variable by
$\text{ind}(h, q)$ and call it an \textit{independent data set of width $h$
  (with probability $q$)}.

The two notions of $\text{ind}(D)$ and $\text{ind}(h, q)$ allow to compare the
marginal frequencies of the columns of $D$ without requiring that all variables
have the same marginal frequency. This is because $\text{ind}(D)$ can be
understood as an expectation of $D$ with randomized column ordering. The
comparison can now be done by finding a probability $s\in[0, 1]$ such that both
correlation dimensions are equal, or more formally:
\begin{align*}
  \text{cd}_A(\text{ind}(m, s); \alpha, 1-\alpha) =
  \text{cd}_A(\text{ind}(D); \alpha, 1-\alpha).
\end{align*}
The parameter $\alpha$ is chosen empirically and is set 
the authors to $\frac{1}{4}$.

Finally, given the correlation dimension of $D$ we search for the independent
data set of the smallest size $h$ with probability $s$ that has the same
correlation dimension as $D$.\footnote{For explanations on the monotonicity of
  $\text{cd}_A$ wrt. $\text{ind}(D)$ and other omitted requirements we refer the
  interested reader to the original work~\cite{Tatti2006DimensionWhatIT}.} As
such we formalize the normalized correlation dimension for an acceptable
threshold $\varepsilon\ll 1$ as:
\[
  \text{ncd}_A(D; \alpha, 1 - \alpha, \epsilon)
  = \min_{h\in\mathbb{N}}
  \{h \mid \varepsilon > |\text{cd}_A(\text{ind}(h, s); \alpha, 1 - \alpha) -
  \text{cd}_A(D; \alpha, 1 - \alpha)|\}\]

\subsection{Intrinsic Dimension of Geometric Data Sets}
\label{sec:IDGDS}
We now turn towards the \textit{intrinsic dimension} as introduced by Hanika et
al.~\cite{Hanika2022IntrinsicDO} and briefly recapitulate its mathematical
ingredients.

Consider a set $X$ of \textit{data points}, a set $F\subseteq\mathbb{R}^X$ of
\textit{feature functions} from $X$ to $\mathbb{R}$ and a function of the form
$d_F(x,y)\coloneqq\sup_{f\in F}|f(x)-f(y)|$. It is required that $X$ fulfills
the condition $\sup_{x,y\in X} d_F(x,y)<\infty$. Furthermore let $(X, d_F)$ be a
complete and separable metric space with $\mu$ being a Borel probability measure
on $(X, d_F)$. However, these conditions are met automatically when $X$ not
empty and finite. We call the triple $\mathscr{D} = (X, F, \mu)$ a
\textit{geometric data set}. As it turns out this notion allows for thoroughly
defining a family of dimension functions, of which we want to look into
$\partial(\mathscr{D})$.


As formal concepts are integral to \ac{fca} we use them to define the intrinsic
dimension of a formal context. More specifically we use the size of an extent
(i.e., object set of a formal concept), determined by the normalized counting
measure $\nu_G$, to a given intent (i.e., attribute set of a formal concept) by
constructing the set of feature functions as follows: $F(\mathbb{K}) = \{\nu_G(A)\cdot\mathbbm{1}_B \mid (A, B) \in \mathcal{B}(\mathbb{K})\}$

This gives, for each concept, a map that checks if a given attribute is part of
the intent and then evaluates to the corresponding value of the normalized
counting measure of the extent. In other words, we will measure the data using
the concepts it contains. An advantage over other intrinsic dimension methods is
that this can be done without the need to introduce a metric to the original
data space. Based on the above we define the geometric data set representation
$\mathscr{D}(\mathbb{K})=(M, F(\mathbb{K}), \nu_M)$, where $\nu_M$ is the
normalized counting measure on $M$.

Hanika et al.~\cite{Hanika2022IntrinsicDO} introduce for this representation an
explicit formula to compute the intrinsic dimension.
For each feature function $\nu_G(A)\cdot\mathbbm{1}_B$ a \emph{partial diameter}
is derived by
\begin{align*}
  \text{PartDiam}((\nu_G(A)\cdot\mathbbm{1}_B)_\star(\nu_M), 1-\alpha) =
  \begin{cases}
    \nu_G(A) & \text{if } \alpha < \nu_M(B) < 1 - \alpha \\
    0       & \text{otherwise}
  \end{cases}
\end{align*}
where $(\nu_G(A)\cdot\mathbbm{1}_B)_\star(\nu_M)$ is the push-forward measure of
$\nu_M$. This is then used to compute the \emph{observable diameter}:
\begin{align*}
  \text{ObsDiam}(\mathscr{D}(\mathbb{K}); -\alpha) =
  \sup\{\nu_G(A) \mid (A, B) \in \mathcal{B}(\mathbb{K}),
  \alpha < \nu_M(B) < 1 - \alpha\}
\end{align*}
The resulting value can be interpreted as the fraction of the data that the
features (i.e., the formal concepts) can ``see''. However, only those features
are considered that have an intent size of more than $\alpha$ and less than
$1-\alpha$.

Using this the intrinsic dimension is defined
$\partial_\Delta(\mathscr{D}(\mathbb{K})) =
{(\Delta(\mathscr{D}(\mathbb{K})))}^{-2}$ where
\begin{align*}
  \Delta(\mathscr{D}(\mathbb{K})) =
  \int_0^{\frac{1}{2}}\text{ObsDiam}(\mathscr{D}(\mathbb{K}); -\alpha)\, \mathrm{d}\alpha
\end{align*}

\section{Approximating the ID Geometric Data Sets}

We start by giving a detailed example of the calculations of the geometric
intrinsic dimension for a small formal context. This gives a straightforward
insight on how restrictions on a non trivial minimum support of the considered
formal concepts lead to bounds on the observable diameter and thus the intrinsic
dimension itself. Afterwards we formalize the intuitive notion.

Consider the well know formal context \textit{Living Beings in
  Water}~\cite{Ganter2012FormalCA} with eight objects and nine attributes.
Its corresponding concept lattice has nineteen elements. Let us set
$\mathcal{N}(<0)\coloneqq\{(\nu_G(A),\nu_M(B))\mid (A,B)\in\mathcal{B}(\mathbb{K})\}$ and let $  \mathcal{N}(\alpha)\coloneqq\{(\nu_G(A),\nu_M(B))\mid (A,B)\in\mathcal{B}(\mathbb{K}),
    \alpha<\nu_M(B)<1-\alpha\}$
which results for the example context to
\begin{align*}
  \mathcal{N}(<0) = \{
  &(1, \sfrac{1}{9}),
  (\sfrac{5}{8}, \sfrac{2}{9}),
  (\sfrac{1}{4}, \sfrac{1}{3}),
  (\sfrac{1}{4}, \sfrac{4}{9}),
  (\sfrac{3}{8}, \sfrac{1}{3}), \\
  &(\sfrac{1}{2}, \sfrac{2}{9}),
  (\sfrac{1}{8}, \sfrac{4}{9}),
    (\sfrac{1}{8}, \sfrac{5}{9}),
    (0, 1)
    \},
\end{align*}
i.e., all possible pairs of the sizes of ex- and intents.

We can list all intervals of $\alpha$ for which the observable diameter changes:
\begin{center}
  \begin{tabular}{c@{\hskip 0.5in}c c}
    $\alpha$
    & $\mathcal{N}(\alpha)$
    & $\text{ObsDiam}(\mathscr{D}(\mathbb{K}),-\alpha)$%
    \\
    \hline
    $\alpha < 0$
    & $\mathcal{N}(<0)$
    & $1$ \\
    $0\leq\alpha<\sfrac{1}{9}$
    & $\mathcal{N}(<0)\setminus\{(0, 1)\}$
    & $1$ \\
    $\sfrac{1}{9}\leq\alpha<\sfrac{2}{9}$
    & $\mathcal{N}(0)\setminus\{(1, \sfrac{1}{9})\}$
    & $\sfrac{5}{8}$ \\
    $\sfrac{2}{9}\leq\alpha<\sfrac{3}{9}$
    & $\mathcal{N}(\sfrac{1}{9})\setminus
      \{
      (\sfrac{5}{8},\sfrac{2}{9}),
      (\sfrac{1}{2},\sfrac{2}{9})
      \}$
    & $\sfrac{3}{8}$ \\
    $\sfrac{3}{9}\leq\alpha<\sfrac{4}{9}$
    & 
      $\{
      (\sfrac{1}{4}, \sfrac{4}{9}),
      (\sfrac{1}{8}, \sfrac{4}{9}),
      (\sfrac{1}{8}, \sfrac{5}{9})
      \}$
    & $\sfrac{1}{4}$ \\
    $\sfrac{4}{9}\leq\alpha\leq\sfrac{1}{2}$
    & $\emptyset$
    & $0$ \\
    \hline
  \end{tabular}
\end{center}
For our calculations we set $\sup\emptyset := 0$. The integration of the
observable diameter over values of $\alpha$ between $0$ and $\frac{1}{2}$
simplifies to a sum with finite number of terms, as we are calculating the area
under a stair case function. At this point, compare the following
with the main line in~\Cref{fig:lbiw_gid_bounds}.
\[\Delta(\mathscr{D}(\mathbb{K}))
= \frac{1}{9} +
    \frac{1}{9} \cdot \frac{5}{8} +
    \frac{1}{9} \cdot \frac{3}{8} +
    \frac{1}{9} \cdot \frac{1}{4} = \frac{1}{4} \implies
  \partial_\Delta(\mathscr{D}(\mathbb{K})) = 16\]

\subsection{Minimum Support and $\text{ObsDiam}(\mathscr{D},-\alpha)$}%
\label{sec:approximation:min_support}
Let $s\in[0, 1]$ denote the minimum support and consider the set of all concepts
that have $s$, i.e., $\mathcal{B}_s(\mathbb{K})\coloneqq\{(A,
B)\in\mathcal{B}(\mathbb{K})\mid s\leq\nu_G(A)\}$. By restricting the set of
feature functions $F_s(\mathbb{K})\coloneqq\{\nu_G(A)\cdot \mathbbm{1}_B\mid
(A,B)\in\mathcal{B}_s(\mathbb{K})\}$ accordingly, we can construct the 
geometric data set $\mathscr{D}_s(\mathbb{K})\coloneqq(M,F_s,\nu_M)$. 
Thus, we want to motivate the estimation of the ID by discarding
concepts with low support. For this we define the sets
$  \mathcal{N}(<0, s)
  \coloneqq\{(\nu_G(A),\nu_M(B))\mid(A,B)\in\mathcal{B}_s(\mathbb{K})\}$
and $\mathcal{N}(\alpha, s)
  \coloneqq\{(\nu_G(A),\nu_M(B))\mid(A,B)\in\mathcal{B}_s(\mathbb{K}),
    \alpha<\nu_M(B)<1-\alpha\}$.
Analogously to the steps above, we show the computation for minimum support
$s=\sfrac{1}{2}$ and 
$ \mathcal{N}(<0, \sfrac{1}{2}) =
  \{
  (1, \sfrac{1}{9}),
  (\sfrac{5}{8}, \sfrac{2}{9}),
  (\sfrac{1}{2}, \sfrac{2}{9})
  \}$.
Thus the observable diameters changes at fewer values of $\alpha$ and the sum
changes correspondingly:
\begin{center}
  \begin{tabular}{c@{\hskip 0.2in}c c}
    $\alpha$
    & $\mathcal{N}(\alpha, \sfrac{1}{2})$
    & $\text{ObsDiam}(\mathscr{D}_s(\mathbb{K});-\alpha)$
    \\
    \hline
    $\alpha < \sfrac{1}{9}$
    & $\mathcal{N}(<0, \sfrac{1}{2})$
    & $1$ \\
    $\sfrac{1}{9}\leq\alpha<\sfrac{2}{9}$
    & $\{(\sfrac{5}{8}, \sfrac{2}{9}), (\sfrac{1}{2}, \sfrac{2}{9})\}$
    & $\sfrac{5}{8}$ \\
    $\sfrac{2}{9}\leq\alpha\leq\sfrac{1}{2}$
    & $\emptyset$
    & $0$ \\
    \hline
  \end{tabular}
\end{center}
Thus, $
  \Delta(\mathscr{D}_s(\mathbb{K}))
  =\frac{1}{9} +
    \frac{1}{9} \cdot \frac{5}{8} = \frac{13}{72}\implies\partial_\Delta(\mathscr{D}_s(\mathbb{K})) = \frac{5184}{169}\approx 30.7.$
The key consideration is that for $\sfrac{2}{9}\leq\alpha\leq\sfrac{1}{2}$ all
unobserved $\text{ObsDiam}(\mathscr{D}(\mathbb{K});-\alpha)$ have to be lower
than $\sfrac{5}{8}$. Additionally, it is clear that the sum
$\Delta(\mathscr{D}(\mathbb{K}))$ has to be higher than
$\Delta(\mathscr{D}_s(\mathbb{K}))$ because of the missing terms. We can use
this fact to describe upper and lower bounds on
$\Delta(\mathscr{D}(\mathbb{K}))$ and $\partial_\Delta(\mathscr{D}(\mathbb{K}))$,
which we formalize in the following. For the example the bounds are depicted in~\Cref{fig:lbiw_gid_bounds}. 
\begin{figure}[t]
  \centering
  \includegraphics[width=0.5\textwidth]{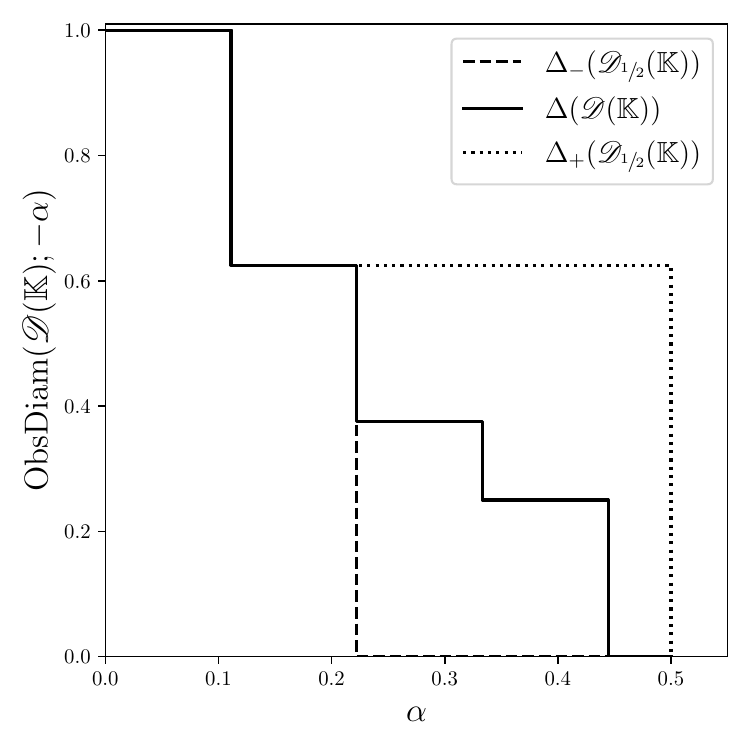}
  \caption{Exemplary visualization of the calculation steps for the geometric intrinsic
    dimension and the corresponding bounds for minimum support
    $\sfrac{1}{2}$. The used symbols for the bounds are introduced at
    the end of~\cref{sec:approximation:min_support}}%
  \label{fig:lbiw_gid_bounds}
\end{figure}
\subsubsection{Bounds on geometric intrinsic dimension}
It follows from $\mathcal{B}_s(\mathbb{K})\subseteq\mathcal{B}(\mathbb{K})$ that
$\text{ObsDiam}(\mathscr{D}(\mathbb{K});-\alpha) \geq
\text{ObsDiam}(\mathscr{D}_s(\mathbb{K});-\alpha)$ holds, which leads to
the inequalities $\Delta(\mathscr{D}(\mathbb{K}))\geq \Delta(\mathscr{D}_s(\mathbb{K}))$ and $  \partial_\Delta(\mathscr{D}(\mathbb{K}))\leq\partial_\Delta(\mathscr{D}_s(\mathbb{K}))$. 
The other bound can also be estimated for finite formal contexts as follows. 
\begin{proposition}
  For $s\in[0,1]$ let 
  $\alpha_{-1}\coloneqq\max\{\alpha\mid\text{ObsDiam}(\mathscr{D}_s(\mathbb{K});-\alpha)>0\}$.
  Then for all $\alpha\in[0,\sfrac{1}{2}]$ the
  $\text{ObsDiam}(\mathscr{D}(\mathbb{K});-\alpha)
  \leq\text{ObsDiam}(\mathscr{D}_s(\mathbb{K});-\alpha_{-1})$.
\end{proposition}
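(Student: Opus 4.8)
The plan is to unfold the observable diameter on the left-hand side and reduce the inequality to a bound that can be checked one concept at a time. Because $\mathbb{K}$ is finite, $\mathcal{B}_s(\mathbb{K})$ is finite, so $\alpha\mapsto\text{ObsDiam}(\mathscr{D}_s(\mathbb{K});-\alpha)$ is a non-increasing step function attaining only finitely many values; this is what makes $\alpha_{-1}$ well-defined, and I read $\text{ObsDiam}(\mathscr{D}_s(\mathbb{K});-\alpha_{-1})$ as the smallest \emph{nonzero} value it attains (equivalently, its left limit at the point $\alpha_{-1}$ where it first vanishes). Since $\text{ObsDiam}(\mathscr{D}(\mathbb{K});-\alpha)=\sup\{\nu_G(A)\mid(A,B)\in\mathcal{B}(\mathbb{K}),\ \alpha<\nu_M(B)<1-\alpha\}$ and $\sup\emptyset:=0$, it suffices to show $\nu_G(A)\le\text{ObsDiam}(\mathscr{D}_s(\mathbb{K});-\alpha_{-1})$ for every concept $(A,B)$ with $\alpha<\nu_M(B)<1-\alpha$, and then pass to the supremum.

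Fix such a concept and split on its support. If $(A,B)\notin\mathcal{B}_s(\mathbb{K})$, i.e.\ $\nu_G(A)<s$, I invoke what one may call the ``support gap'': every nonzero value attained by $\alpha\mapsto\text{ObsDiam}(\mathscr{D}_s(\mathbb{K});-\alpha)$ equals $\nu_G(A')$ for some $(A',B')\in\mathcal{B}_s(\mathbb{K})$ (finiteness), hence is at least $s$; in particular $\nu_G(A)<s\le\text{ObsDiam}(\mathscr{D}_s(\mathbb{K});-\alpha_{-1})$. If instead $(A,B)\in\mathcal{B}_s(\mathbb{K})$, I want $(A,B)$ to be admissible for the observable diameter evaluated at $\alpha_{-1}$, namely $\alpha_{-1}<\nu_M(B)<1-\alpha_{-1}$; this is immediate from $\nu_M(B)>\alpha\ge\alpha_{-1}$ and $\nu_M(B)<1-\alpha\le1-\alpha_{-1}$, whence $\nu_G(A)\le\text{ObsDiam}(\mathscr{D}_s(\mathbb{K});-\alpha_{-1})$ straight from the definition. (For $\alpha>\alpha_{-1}$ this second case is in fact vacuous: an admissible high-support concept would force $\text{ObsDiam}(\mathscr{D}_s(\mathbb{K});-\alpha)\ge s>0$, contradicting the maximality defining $\alpha_{-1}$.) Taking the supremum over admissible concepts then yields the claimed inequality.

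The only genuinely used facts are the monotonicity of $\alpha\mapsto\text{ObsDiam}(\cdot;-\alpha)$ and the support gap; finiteness of $\mathbb{K}$ enters only to make $\alpha_{-1}$ and ``smallest nonzero value'' meaningful. The step that needs care — and what I would pin down before writing the final proof — is the admissible range of $\alpha$ together with the precise reading of $\alpha_{-1}$: the inclusion $(\alpha,1-\alpha)\subseteq(\alpha_{-1},1-\alpha_{-1})$ used in the second case needs $\alpha\ge\alpha_{-1}$, and for $\alpha<\alpha_{-1}$ a high-support concept can push $\text{ObsDiam}(\mathscr{D}(\mathbb{K});-\alpha)$ strictly above $\text{ObsDiam}(\mathscr{D}_s(\mathbb{K});-\alpha_{-1})$ — already the top concept $(G,G')$ of \textit{Living Beings in Water} does so at $\alpha=0$, where $\text{ObsDiam}(\mathscr{D}(\mathbb{K});0)=1$. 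So I expect the statement is really meant over $[\alpha_{-1},\tfrac{1}{2}]$ (with $\text{ObsDiam}(\mathscr{D}_s(\mathbb{K});-\alpha_{-1})$ interpreted as above); once that is settled, the rest is bookkeeping over the finitely many breakpoints.
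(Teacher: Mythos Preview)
Your analysis is correct on both counts. First, the stated range $[0,\tfrac12]$ is too large: the paper's own running example already refutes it at $\alpha=0$, where $\text{ObsDiam}(\mathscr{D}(\mathbb{K});0)=1>\tfrac58$. What the bound $\Delta_+$ actually requires is the pair of facts you isolate --- equality $\text{ObsDiam}(\mathscr{D};-\alpha)=\text{ObsDiam}(\mathscr{D}_s;-\alpha)$ on $[0,\alpha_{-1})$ and the inequality $\text{ObsDiam}(\mathscr{D};-\alpha)\le\sigma_{-1}$ on $[\alpha_{-1},\tfrac12]$ --- and your argument for the latter via the ``support gap'' $\nu_G(A)<s\le\sigma_{-1}$ is clean and complete. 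Your left-limit reading of $\text{ObsDiam}(\mathscr{D}_s;-\alpha_{-1})$ is also the only one consistent with how $\sigma_{-1}$ and $\Delta_+$ are used immediately afterwards.

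The paper's own proof makes the same case split on membership in $\mathcal{B}_s$, but treats the low-support case differently: it fixes $(A_2,B_2)\in\mathcal{B}_s(\mathbb{K})$ with $\nu_M(B_2)=\alpha_{-1}$ and infers $\nu_M(B_1)>\alpha_{-1}$ from $\nu_G(A_1)<\nu_G(A_2)$. That implication does not hold for arbitrary pairs of concepts --- extent and intent sizes are anti-monotone only along chains of the concept lattice, not globally, so two incomparable concepts can have one with strictly smaller extent \emph{and} strictly smaller intent. Your support-gap route avoids this entirely and is the argument that actually closes; it also immediately yields the equality on $[0,\alpha_{-1})$, since low-support concepts can never realise the supremum while an admissible high-support concept is present.
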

\begin{proof}
  Let $(A_1,B_1)\in\mathcal{B}(\mathbb{K})\setminus\mathcal{B}_s(\mathbb{K})$,
  i.e., a concept with $\nu_G(A_1)<s$. Let  $(A_2, B_2)\in\mathcal{B}_s(\mathbb{K})$ be a concept such that the intent size is $\alpha_{-1}$, i.e., $\nu_{M}(B_{2})=\alpha_{-1}$. From $\nu_G(A_1)<\nu_G(A_2)$ we get $\nu_M(B_1)>\nu_M(B_2)=\alpha_{-1}$. Concepts with support smaller than $s$ can only influence values of ObsDiam with $\alpha$ larger than $\alpha_{-1}$. Thus, for all $\alpha\in[0,\alpha_{-1}]$ the ObsDiam is unchanged. For $\alpha\in(\alpha_{-1},\sfrac{1}{2}]$ the ObsDiam can only become smaller when lowering $s$.
\end{proof}
We continue to use $\alpha_{-1}$ from the proof and let
$\sigma_{-1}\coloneqq\text{ObsDiam}(\mathscr{D}_s(\mathbb{K});-\alpha_{-1})$. This
allows us to define upper and lower bounds
\begin{align*}
  \Delta_-(\mathscr{D}_s(\mathbb{K}))
  &\coloneqq \Delta(\mathscr{D}_s(\mathbb{K})),\quad
    \Delta_+(\mathscr{D}_s(\mathbb{K}))
    \coloneqq \Delta(\mathscr{D}_s(\mathbb{K})) +
    \left(\sfrac{1}{2} - \alpha_{-1}\right)\cdot \sigma_{-1} \\
  \partial_\Delta^-(\mathscr{D}_s(\mathbb{K}))
  &\coloneqq {(\Delta_+(\mathscr{D}_s(\mathbb{K})))}^{-2},\quad
    \partial_\Delta^+(\mathscr{D}_s(\mathbb{K}))
    \coloneqq {(\Delta_-(\mathscr{D}_s(\mathbb{K})))}^{-2}
\end{align*}
that fulfill the inequalities  $\Delta_-(\mathscr{D}_s(\mathbb{K}))
  \leq\Delta(\mathscr{D}(\mathbb{K}))
    \leq\Delta_+(\mathscr{D}_s(\mathbb{K}))$ and inadvertently  $\partial_\Delta^-(\mathscr{D}_s(\mathbb{K}))
  \leq\partial_\Delta(\mathscr{D}(\mathbb{K})) \leq\partial_\Delta^+(\mathscr{D}_s(\mathbb{K}))$.

\subsection{Data sets}%
\label{sec:approximation:data_sets}
The empirical investigations by Tatti et al.\ were done with the following data
sets: \emph{accidents, courses, kosarak, paleo, pos, retail, webview-1,
  webview-2} and \emph{newsgroups}. We were not able to retrieve \emph{courses} and \emph{paleo}.
We presume that \emph{paleo} was a snapshot~\footnote{\url{https://nowdatabase.org}}, and the creation and transformation of the snapshot is not documented. The
\emph{newsgroups} data set is used by Tatti et al.\ in other experiments, thus we did not include it in our investigation. Additionally, the original data sets from the \emph{KDD-CUP 2000} challenge are
not available anymore, but there exists a partially restored
repository\footnote{\url{https://www.kdd.org/kdd-cup/view/kdd-cup-2000}}.
Unfortunately, it only contains the \emph{pos} and \emph{webview-2} data sets,
but not \emph{webview-1}. All other data sets could be retrieved from a different
website\footnote{\url{http://fimi.uantwerpen.be/data}} and are also included in \emph{scikit-mine}\cite{Inra2022scikitmineAP}. In total we used \emph{accidents,
  kosarak, pos, retail} and \emph{webview-2}. Furthermore we included\footnote{all available
  through \emph{scikit-mine} too} \emph{chess, connect, mushroom, pumsb} and
\emph{pumsb\_star}. \Cref{tab:data_set_info} shows
quantitative descriptions of all data sets and indicate their status regarding
availability.

\begin{table}[t]
  \centering
  \caption{Qualitative descriptions of data sets and their availability.}%
  \label{tab:data_set_info}
  \begin{tabular}{l r r r r@{\hskip 0.1in}c}
    Data Set
    & $|G|$
    & $|M|$
    & $|I|$
    & density
    & available
    \\
    \hline
    Accidents
    & 340183
    &   469
    & 11500870
    & 7.21
    & yes
    \\
    Courses
    & 2405
    &  5021
    &    64743
    & 0.54
    & no
    \\
    Kosarak
    & 990002
    & 41271
    &  8019015
    & 0.02
    & yes
    \\
    Paleo
    &    501
    &   139
    &     3537
    & 5.08
    & no
    \\
    POS
    & 515597
    &  1657
    &  3367020
    & 0.39
    & yes
    \\
    Retail
    &  88162
    & 16470
    &   908576
    & 0.06
    & yes
    \\
    WebView-1
    &  59602
    &   497
    &   149639
    & 0.51
    & no
    \\
    WebView-2
    &  77512
    &  3340
    &   358278
    & 0.14
    & yes
    \\
    \hline
    Chess
    & 3196
    & 75
    & 118252
    & 0.49
    & yes
    \\
    Connect
    & 67557
    & 129
    & 2904951
    & 0.33
    & yes
    \\
    Mushroom
    & 8124
    & 117
    & 178728
    & 0.19
    & yes
    \\
    Pumsb
    & 49046
    & 2113
    & 3629404
    & 0.04
    & yes
    \\
    Pumsb\_star
    & 49046
    & 2088
    & 2475947
    & 0.02
    & yes
    \\
    \hline
  \end{tabular}
\end{table}
\subsection{Algorithm}%
\label{sec:approximation:algorithm}
Given the theoretical considerations in~\Cref{sec:approximation:min_support} the resulting algorithm is straightforward.  The starting point is the enumeration of all formal concepts and
their ex- and intent sizes. This can be obtained through any available concept
miner. However, we note several minor problems in~\Cref{sec:approximation:problems}. For finite contexts, there is only a finite number of values for $\alpha$ at which the ObsDiam changes. 
We thus can employ a stair case pattern, similar to~\cref{fig:lbiw_gid_bounds}, i.e., the set of all possible pairs of ex- and intent sizes $\mathcal{N}(<0, s)$. To get all different values for $\alpha$ at which ObsDiam potentially changes, consider the set
$\mathcal{C}\coloneqq\{\nu_M(B)\mid (\nu_G(A),\nu_M(B))\in\mathcal{N}(<0, s)\}$ and derive
from it a tuple of the form $(\alpha_1,\dots,\alpha_k)$ with $k=|\mathcal{C}|$, such
that the elements of the tuple are increasingly ordered. By defining
$\sigma_i\coloneqq\max\{\nu_G(A)\mid
(\nu_G(A),\nu_M(B))\in\mathcal{N}(<0,s),\nu_M(B)=\alpha_i\}$ for $i\in[k]$ we
can carry out the last preprocessing step by building a tuple (ordered list) from these
values, namely $\mathcal{T}\coloneqq((\alpha_1,\sigma_1),\dots,(\alpha_k,\sigma_k))$.

Consecutively we calculate straightforward the actual values for  $\alpha$ at which ObsDiam changes. The main idea is to start in the middle of $\mathcal{T}$ and move two pointers outward. We then look for changes of the current maximum of $\nu_{G}$ and keep the related $\alpha_{i}$. This results in
$\mathcal{A}\coloneqq((\alpha_1,\sigma_1),\dots,(\alpha_k,\sigma_k))$.\footnote{For
  completeness, we add $(0, 1)$ as the first and $(\sfrac{1}{2}, 0)$ as the last
  point in cases where they do not already exist.} During the procedure we can
collect the individual parts of the integral sum $\Delta$. The whole procedure takes negligible runtime compared to the calculation of concepts. We include the pseudocode in more detail in appendix. \Cref{alg:gid_init} computes the initialization of the necessary pointers and~\Cref{alg:gid} the main routine.

\subsection{Problems}%
\label{sec:approximation:problems}
We used existing tools to calculate the concepts. We experimented with
the following concept miners \emph{fcbo}, \emph{pcbo},
\emph{iteress}~\cite{Outrata2012FastAF,Krajca2010AdvancesIA,%
  Krajca2010ParallelAF,Krajca2008ParallelRA}%
\footnote{all three available at \url{https://fcalgs.sourceforge.net/}}
and \emph{inclose}~\cite{Andrews2009InCloseAF}
\footnote{\url{https://sourceforge.net/projects/inclose/}}.

The tool \emph{inclose} was not able to compute the concepts for
non-trivial sized data sets, as we encountered \textit{stack smashing}
errors repeatedly. Likewise, \emph{iteress} turned out to be unusable
for our experiments, as it outputs only a sub set of all formal
concepts. We hypothesize that these are the factors of the set
representation of the formal
context~\cite[Chapter~11.1]{Ganter2013DiskreteMG}. Then such a result
would need further computation to obtain the remaining concepts.

Therefore, we decided to conduct our experiments with \emph{fcbo} and
\emph{pcbo}. Both algorithms compute only the intent for each
concept. Thus the extent to a given intent has to be calculated
afterwards. We used a simple counting approach over binary arrays to
achieve this. However, in cases of very large data sets combined with
a very large number of intents (i.e., Gigabytes), our direct approach
was infeasible. This was the case for
$\text{accidents}_{0.1},\text{chess}_{0.1,0.2},\text{connect}_{0.0},
\text{pumsb}_{0.4,0.5}$ and $\text{retail}_{0.0}$, where the indices
denote the different values for the minimum support. Additionally,
while using \emph{pcbo} on very large data sets, such as
\emph{kosarak}, we observed overflows of the variable that counts the
closure steps and in some cases \textit{segfault} errors. We conclude
from these observations and problems that for future applications of
FCA on large and high-dimensional data sets a renewal of the classical
concept mining tools is desired and necessary.

\subsection{Results}%
\label{sec:approximation:results}
We applied our method on all available data sets
from Tatti et al.~\cite{Tatti2006DimensionWhatIT} and additionally five more as
described in~\Cref{sec:approximation:data_sets}. All results, and the source code, are documented in detail and can be reproduced~\cite{anonymous_2024_10908237}. As stated above, the runtime for computing the concepts and the related  ex- and intent measures where the limiting factor
regarding the feasibility. We tried using minimum support values
$\{0.0,\sfrac{1}{10},\dots\sfrac{9}{10}\}$. We applied the cut-off time of two days for the computation of the concepts per data set and minimum support value $s$. In~\Cref{tab:gid_results} we
depict the results for the smallest achieved $s$. We included the number of (staircase) steps in $|\mathcal{A}|$ as well as the lower and upper bound on the geometric ID.

\begin{table}[t]
  \centering
  \caption{Approximations of geometric intrinsic dimension.}%
  \label{tab:gid_results}
  \begin{tabular}{r@{\hskip 0.1in}r@{\hskip 0.1in}r@{\hskip 0.1in}r@{\hskip 0.1in}r@{\hskip 0.1in}r}
    Data Set
    & $s$
    & $|\mathcal{B}_s(\mathbb{K})|$
    & $|\mathcal{A}|$
    & $\partial_\Delta^-(\mathscr{D}_s)$
    & $\partial_\Delta^+(\mathscr{D}_s)$
    \\
    \hline
    Accidents
    & $0.2$
    & $887441$
    & $16$
    & $59.6$
    & $2494.7$
    \\
    Kosarak
    & $0.1$
    & $10$
    & $5$
    & $189.8$
    & $1460594324.9$
    \\
    POS
    & $0.1$
    & $14$
    & $4$
    & $117.4$
    & $4481681.0$
    \\
    Retail
    & $0.1$
    & $10$
    & $4$
    & $36.6$
    & $330947622.3$
    \\
    WebView-2
    & $0.0$
    & $1691051$
    & $99$
    & $644467178.1$
    & $887751112.0$
    \\
    Chess
    & $0.2$
    & $22918586$
    & $25$
    & $16.3$
    & $23.1$
    \\
    Connect
    & $0.1$
    & $8037778$
    & $31$
    & $28.6$
    & $39.4$
    \\
    Mushroom
    & $0.0$
    & $227700$
    & $24$
    & $239.5$
    & $239.8$
    \\
    Pumsb
    & $0.6$
    & $1075015$
    & $24$
    & $5.2$
    & $9493.0$
    \\
    Pumsb\_star
    & $0.1$
    & $1513782$
    & $40$
    & $135.1$
    & $4948.7$
    \\
    \hline
  \end{tabular}
\end{table}
Additionally, we depict the calculated (step) functions of
$\text{ObsDiam}$ for the \emph{chess} data set
in~\Cref{fig:chess_alphas}. For each $s$ the curve continues until
$\alpha_{-1}$, where it then drops to zero. The area (under the curve)
thus covered is precisely $\Delta_-(\mathscr{D}_s(\mathbb{K}))$. If we
would continue the horizontal line that runs through
$(\alpha_{-1},\sigma_{-1})$ until $(\sfrac{1}{2},\sigma_{-1})$, then
we obtain exactly the additional area that is described by
$\Delta_+(\mathscr{D}_s(\mathbb{K}))$, compare
to~\Cref{fig:lbiw_gid_bounds}.  Conversely, \Cref{fig:chess_gid}
depicts how the discussed upper and lower bounds of the ID change
depending on the used minimum support value.  Analogous depictions for
the remaining data sets can found in the appendix.
\begin{figure}[t]
  \centering
  \begin{subfigure}[t]{0.5\textwidth}
    \includegraphics[width=\textwidth]{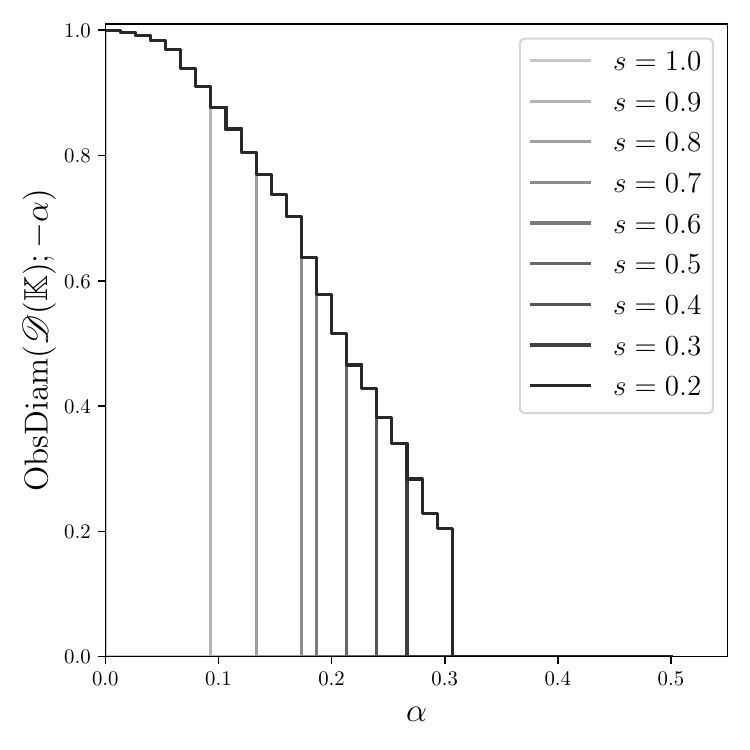}
    \captionsetup{width=.95\linewidth}
    \caption{ObsDiam as a function of $\alpha$ for different minimum support
      each staircase pattern in varying shades of gray, some overlapping.}%
    \label{fig:chess_alphas}
  \end{subfigure}%
~%
  \begin{subfigure}[t]{0.5\textwidth}
    \includegraphics[width=\textwidth]{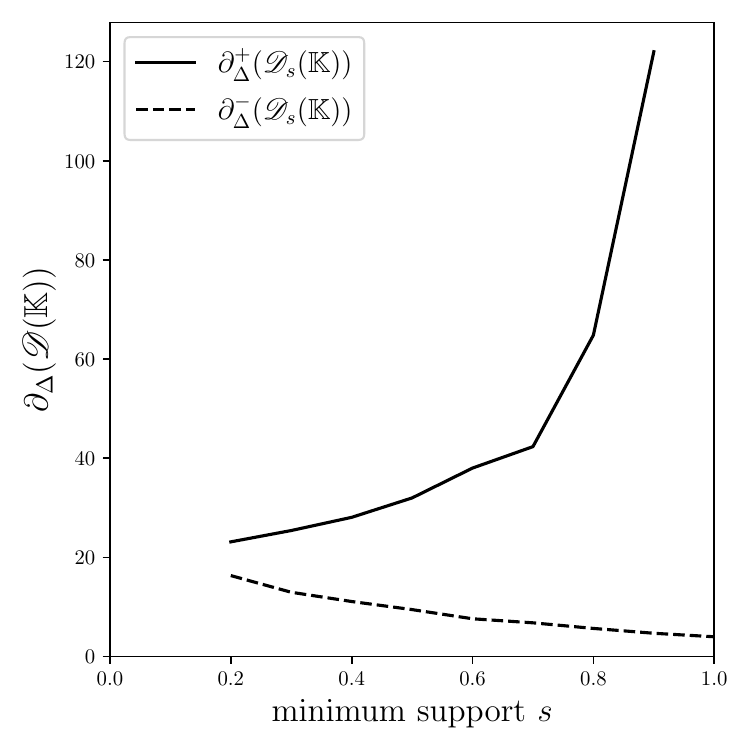}
    \captionsetup{width=.95\linewidth}
    \caption{Lower and upper bound of the geometric intrinsic dimension for
      different minimum support. Missing values are due to infinity (s=1.0) and
      excessive long running times (s<0.2).}%
    \label{fig:chess_gid}
  \end{subfigure}
  \caption{Results of the computation for the \emph{chess} data set. }%
  \label{fig:chess_result}
\end{figure}
\subsection{Observations}
We were able to calculate the concepts with low minimum support for most data
sets.\footnote{A notable exception to that is the \emph{pumsb} data set, where
  the aforementioned necessary determination of the ex- and intent sizes was the
  limiting factor, not the calculation of the concepts itself.}  As some data
sets are large and sparse, it is clear that many concepts may have even lower
support. For such data sets in particular it is apparent that the interval
described by upper and lower bound for the ID is very wide. Thus, we gain
less information even at very low support, especially for the data sets
\emph{kosarak, pos} and \emph{retail}. In the other cases we get estimations for
the ID that is more close, often within same or the next magnitude. 

Additionally, it is interesting to see that the values of ObsDiam
change very few times, as the set $\mathcal{A}$ is very small when
compared to the size of the set concepts. In absolute terms,
\emph{webview-2} appears to be an outlier, as it has a lot of concepts
with different intent sizes, that change the ObsDiam through their
extents as well.\footnote{Note that for support $0.0$ we have
  $\partial_\Delta(\mathscr{D})=\partial_\Delta^-(\mathscr{D}_s)$.}
However, those differences might change considerably if results can be
obtained for the remaining, smaller support values. The new approach
to calculating the ID using the support appears to be feasible and
meaningful. This is especially true if one were to examine more
fine-grained support values.
\section{Dimension Comparison}
To contrast our observations, we compare the results with those presented by
Tatti et al. They calculated $\text{cd}_A(D; \alpha_1, \alpha_2, N)$ with $\alpha_1=\sfrac{1}{4}, \alpha_2=\sfrac{3}{4}, N=50$ and the corresponding
$\text{ncd}_A$, as summarized in~\Cref{sec:approaches:correlation_dimension}.
Additionally, the calculated  value of $\sfrac{\mu}{\sigma}$ was provided, where
$\mu=\mathbb{E}[Z_{\text{ind}(D)}]$ and
$\sigma=\mathbb{V}\!ar[Z_{\text{ind}(D)}]$ are the expectation and variance of
the independent data set over $\mathcal{D}$. As we did not reimplement their
approach, the reported values are taken from Tatti et al. \Cref{tab:dim_comparison}
collects their and our results for the available data sets.

A striking observation is that for four out of five data sets the
normalized correlation dimension is between the bounds on the
geometric intrinsic dimension. However, due to their complete
different nature we might consider this as coincidental. More
important is the observation that both bounds for the intrinsic ID
capture other (dimensional) aspects of the data sets than the
normalized correlation dimension. It is also important that neither
data density nor other properties like like $|M|$, $|G|$, $|I|$,
indicate the behavior of the intrinsic dimension (or more specifically
of its bounds). It is important to note that even for the the fully
calculated ID for the \emph{webview-2} data set there does not seem to
be a direct connection to any other property.
As we explained in~\cref{sec:IDGDS}, the formal concepts are a possible measuring instrument for analyzing binary data tables. Theoretically, a variety of other metric and non-metric feature functions can be considered. However, the great advantage of formal concepts is that they also represent the knowledge units of the FCA. Thus, a different choice of feature functions might measure higher or lower dimension values, as the ncd does, but the benefit for FCA-based analyses might be less valuable.

\begin{table}[t]
  \centering
  \caption{Comparison of correlation dimension and intrinsic dimension.}%
  \label{tab:dim_comparison}
  \begin{tabular}{r r r@{\hskip 0.1in}r@{\hskip 0.1in}r@{\hskip 0.1in}%
    r@{\hskip 0.1in}r@{\hskip 0.1in}r@{\hskip 0.1in}r}
    Data Set
    & $|M|$
    & density
    & $\sfrac{\mu}{\sigma}$
    & $\text{cd}_A$
    & $\text{ncd}_A$
    & $s$
    & $\partial_\Delta^-(\mathscr{D}_s)$
    & $\partial_\Delta^+(\mathscr{D}_s)$
    \\
    \hline
    Accidents
    & $469$
    & $7.21$
    & $6.67$
    & $3.79$
    & $220$
    & $0.2$
    & $59.6$
    & $2494.7$
    \\
    Kosarak
    & $41271$
    & $0.02$
    & $3.96$
    & $0.96$
    & $2378$
    & $0.1$
    & $189$
    & $1460594324.9$
    \\
    POS
    & $1657$
    & $0.39$
    & $3.62$
    & $1.14$
    & $181$
    & $0.1$
    & $117.4$
    & $4481681.0$
    \\
    Retail
    & $16470$
    & $0.06$
    & $4.49$
    & $1.33$
    & $1791$
    & $0.1$
    & $36.6$
    & $330947622.3$
    \\
    WebView-2
    & $3340$
    & $0.14$
    & $3.05$
    & $1.01$
    & $359$
    & $0.0$
    & $644467178.1$
    & $887751112.0$
    \\
    \hline
  \end{tabular}
\end{table}
\section{Conclusion and Outlook}
In conclusion, our work has contributed to the understanding of the ID
of binary data tables. Building on the idea to use formal concepts as
measuring instruments, we have seen that the restriction to concepts
with support larger than a specified threshold leads to
straightforward way to determine lower and upper bounds of the
intrinsic dimension.  In some cases a small minimum support is enough
to give relatively close bounds while reducing the computational
runtime to a feasible amount. However, when a large amount of concepts
is concentrated around small minimum support values, the computed
bounds might be magnitudes apart.

Nevertheless, on the basis of the promising initial results we look to continue this line of research and envision several promising avenues for future investigations. Firstly, the limitation to compute all (supported) formal concepts might be overcome by modifying support-driven algorithms, such as  \emph{Titanic}~\cite{Stumme2002ComputingIC}, towards only computing steps for the ID. This could again  significantly enhance the computational feasibility of obtaining the bounds of the geometric intrinsic
dimension. Secondly, it is advisable to extend the experimental study on the ID of binary data towards a larger and more divers set of data sets.

Lastly, the complex interplay between the set of feature functions and
computational feasibility presents an opportunity for reformulation of which
features one might use. Formal concepts by themselves represent a global view on
the connection between data points. By focusing more on similarities and
differences between individual data points, it may be possible to refine the
feature functions to better capture local aspects of the data.

\bibliographystyle{splncs04}
\bibliography{bibliography}

\begin{thebibliography}{10}
\providecommand{\url}[1]{\texttt{#1}}
\providecommand{\urlprefix}{URL }
\providecommand{\doi}[1]{https://doi.org/#1}

\bibitem{Andrews2009InCloseAF}
Andrews, S.: In-close, a fast algorithm for computing formal concepts (2009),
  \url{https://ceur-ws.org/Vol-483/paper1.pdf}

\bibitem{anonymous_2024_10908237}
Anonymous, A.: {What is the intrinsic dimension of your binary data? - and how
  to compute it quickly - Artifacts} (Apr 2024). \doi{10.5281/zenodo.10908237}

\bibitem{DBLP:conf/chdd/BartlBOR12}
Bartl, E., Belohl{\'{a}}vek, R., Osicka, P., Rezankov{\'{a}}, H.:
  Dimensionality reduction in boolean data: Comparison of four {BMF} methods.
  In: Masulli, F., Petrosino, A., Rovetta, S. (eds.) Clustering
  High-Dimensional Data. LNCS, vol.~7627, pp. 118--133. Springer (2012).
  \doi{10.1007/978-3-662-48577-4\_8}

\bibitem{DBLP:journals/isci/BelohlavekOT18}
Belohl{\'{a}}vek, R., Outrata, J., Trnecka, M.: Toward quality assessment of
  boolean matrix factorizations. Inf. Sci.  \textbf{459},  71--85 (2018).
  \doi{10.1016/J.INS.2018.05.016}

\bibitem{DBLP:journals/jcss/BelohlavekT15}
Belohl{\'{a}}vek, R., Trnecka, M.: From-below approximations in boolean matrix
  factorization: Geometry and new algorithm. J. Comput. Syst. Sci.
  \textbf{81}(8),  1678--1697 (2015). \doi{10.1016/J.JCSS.2015.06.002}

\bibitem{DBLP:journals/ijar/BuzmakovDKMN24}
Buzmakov, A., Dudyrev, E., Kuznetsov, S.O., Makhalova, T., Napoli, A.: Data
  complexity: An fca-based approach. Int. J. Approx. Reason.  \textbf{165},
  109084 (2024). \doi{10.1016/J.IJAR.2023.109084}

\bibitem{DBLP:journals/tsp/CarterRH10}
Carter, K.M., Raich, R., III, A.O.H.: On local intrinsic dimension estimation
  and its applications. {IEEE} Trans. Signal Process.  \textbf{58}(2),
  650--663 (2010). \doi{10.1109/TSP.2009.2031722}

\bibitem{Chavez}
Ch\'{a}vez, E., Navarro, G., Baeza-Yates, R., Marroqu\'{\i}n, J.L.: Searching
  in metric spaces. ACM Comput. Surv.  \textbf{33}(3),  273--321 (Sep 2001).
  \doi{10.1145/502807.502808}

\bibitem{Falconer1990FractalGM}
Falconer, K.: Fractal Geometry: Mathematical Foundations and Applications.
  Wiley, Chicester, UK (1990)

\bibitem{Ganter2013DiskreteMG}
Ganter, B.: Diskrete Mathematik: Geordnete Mengen. Springer-Lehrbuch, Springer,
  1 edn. (06 2013). \doi{10.1007/978-3-642-37500-2}

\bibitem{Ganter2012FormalCA}
Ganter, B., Wille, R.: Formal concept analysis: mathematical foundations.
  Springer Science \& Business Media (2012)

\bibitem{Hanika2022IntrinsicDO}
Hanika, T., Schneider, F.M., Stumme, G.: {Intrinsic dimension of geometric data
  sets}. Tohoku Mathematical Journal  \textbf{74}(1),  23 -- 52 (2022).
  \doi{10.2748/tmj.20201015a}

\bibitem{Inra2022scikitmineAP}
{Inria Centre at Rennes University}: scikit-mine: A python module for pattern
  mining. \url{https://scikit-mine.github.io/scikit-mine/} (2022), v0.0.9,
  2022-12-30

\bibitem{Krajca2008ParallelRA}
Krajca, P., Outrata, J., Vychodil, V.: Parallel recursive algorithm for fca.
  In: Belohlavek, R., Kuznetsov, S.O. (eds.) Proc. CLA 2008. CEUR WS, vol.~433,
  pp. 71--82. CEUR-WS.org (2008)

\bibitem{Krajca2010AdvancesIA}
Krajca, P., Outrata, J., Vychodil, V.: Advances in algorithms based on cbo. In:
  Kryszkiewicz, M., Obiedkov, S. (eds.) Proc. CLA 2010. CEUR WS, vol.~672, pp.
  325--337. CEUR-WS.org (2010)

\bibitem{Krajca2010ParallelAF}
Krajca, P., Outrata, J., Vychodil, V.: Parallel algorithm for computing
  fixpoints of galois connections. AMAI  \textbf{59},  257--272 (2010).
  \doi{10.1007/s10472-010-9199-5}

\bibitem{kuznetsov2023formal}
Kuznetsov, S.O., Gromov, V.A., Borodin, N.S., Divavin, A.M.: Formal concept
  analysis for evaluating intrinsic dimension of a natural language. In: Maji,
  P., Huang, T., Pal, N.R., Chaudhury, S., De, R.K. (eds.) {PReMI}. LNCS, vol.
  14301, pp. 331--339. Springer (2023). \doi{10.1007/978-3-031-45170-6\_34}

\bibitem{li2006unified}
Li, T.: A unified view on clustering binary data. Machine Learning
  \textbf{62},  199--215 (2006)

\bibitem{liu2016comparison}
Liu, C.W., Wang, W.C.: A comparison of methods for dimensionality assessment of
  categorical item responses. In: Pacific Rim Objective Measurement Symposium
  (PROMS) 2015 Conference Proceedings. pp. 395--410. Springer (2016)

\bibitem{Outrata2012FastAF}
Outrata, J., Vychodil, V.: Fast algorithm for computing fixpoints of galois
  connections induced by object-attribute relational data. Information Sciences
   \textbf{185}(1),  114--127 (2012).
  \doi{https://doi.org/10.1016/j.ins.2011.09.023}

\bibitem{Pestov1}
Pestov, V.: Intrinsic dimension of a dataset: what properties does one expect?
  In: {IJCNN}. pp. 2959--2964 (2007). \doi{10.1109/IJCNN.2007.4371431}

\bibitem{Pestov2}
Pestov, V.: An axiomatic approach to intrinsic dimension of a dataset. Neural
  Networks  \textbf{21}(2-3),  204--213 (2008).
  \doi{10.1016/j.neunet.2007.12.030}

\bibitem{StubbemannHS23}
Stubbemann, M., Hanika, T., Schneider, F.M.: Intrinsic dimension for
  large-scale geometric learning. Trans. Mach. Learn. Res.  \textbf{2023}
  (2023), \url{https://openreview.net/forum?id=85BfDdYMBY}

\bibitem{Stumme2002ComputingIC}
Stumme, G., Taouil, R., Bastide, Y., Pasquier, N., Lakhal, L.: Computing
  iceberg concept lattices with titanic. Data \& Knowledge Engineering
  \textbf{42}(2),  189--222 (2002). \doi{10.1016/S0169-023X(02)00057-5}

\bibitem{DBLP:journals/corr/abs-2311-07579}
Sutton, O.J., Zhou, Q., Gorban, A.N., Tyukin, I.Y.: Relative intrinsic
  dimensionality is intrinsic to learning. CoRR  (2023).
  \doi{10.48550/ARXIV.2311.07579}

\bibitem{tatti2007distances}
Tatti, N.: Distances between data sets based on summary statistics. Journal of
  Machine Learning Research  \textbf{8}(1) (2007)

\bibitem{Tatti2006DimensionWhatIT}
Tatti, N., Mielikainen, T., Gionis, A., Mannila, H.: What is the dimension of
  your binary data? In: {ICDM}. pp. 603--612. IEEE (2006)

\bibitem{DBLP:journals/kbs/TrneckaT18}
Trnecka, M., Trneckova, M.: Data reduction for boolean matrix factorization
  algorithms based on formal concept analysis. Knowl. Based Syst.
  \textbf{158},  75--80 (2018). \doi{10.1016/J.KNOSYS.2018.05.035}

\end{thebibliography}
%
\appendix
\section{Algorithm}
Pseudocode for the algorithm described in~\Cref{sec:approximation:algorithm}.
\begin{algorithm}{Preprocessing}
  \caption{}%
  \label{alg:gid_t}
  \begin{algorithmic}[1]
    \Procedure{Preprocessing}{$\nu_G(A),\nu_M(B),\mathcal{B}(\mathbb{K})$}
    \State $S\gets \{\;\}$%
    \Comment{A map from intent measure to maximum extent measure}
    \For{$(A, B)\in\mathcal{B}(\mathbb{K})$}
    \State $S[\nu_M(B)]\gets\max\{\nu_G(A), S[\nu_M(B)]\}$%
    \Comment{Find largest extent for given intent}
    \EndFor
    \State $T\gets [(a, b) \text{ for } b, a \text{ in
      sorted}(S.items())]$%
    \Comment{Sort map based on intent measure}
    \State \textbf{return} $T$
    \EndProcedure{Preprocessing}
  \end{algorithmic}
\end{algorithm}
\begin{algorithm}[H]
  \renewcommand{\footnotesize}{\fontsize{6pt}{11pt}\selectfont}
  \scriptsize
  \caption{Initialization}%
  \label{alg:gid_init}
  \begin{algorithmic}[1]
    \Procedure{GID\_init}{$T$}
    \If{$\exists i: {\mathcal{T}[i]}_b\leq 0.5$}%
    \Comment{Check if at least one $\alpha$ is below $0.5$}
    \State$I\gets \max\{i \mid \forall j\leq i: {\mathcal{T}[j]}_b\leq 0.5\}$%
    \Comment{Find position of largest $\alpha$ below $0.5$}
    \If{${\mathcal{T}[I]}_b = 0.5$}\Comment{There is a value for exactly $\alpha=0.5$}
    \State$I_l\gets I$, $s_-\gets a_I$\Comment{Set left index pointer and last supremum}
    \Else%
    \State$I_l\gets I - 1$, $s_-\gets 0$\Comment{Set left index pointer and last supremum}
    \EndIf%
    \State$I_u\gets I$\Comment{Set upper index pointer}
    \Else%
    \State$I_l\gets|\mathcal{T}|-1$, $I_u\gets|\mathcal{T}|$, $s_-\gets 0$%
    \Comment{Set pointers and last supremum}
    \EndIf%
    \State\textbf{return} $I_l,I_u,s_-$
    \EndProcedure%
  \end{algorithmic}
\end{algorithm}
\vspace*{-0.5cm}
\begin{algorithm}[H]
  \renewcommand{\footnotesize}{\fontsize{6pt}{11pt}\selectfont}
  \scriptsize
  \caption{Geometric Intrinsic Dimension of Concept Lattice}%
  \label{alg:gid}
  \begin{algorithmic}[1]
    \Procedure{GID}{$\mathcal{T}$}\Comment{see text}
    \State$I_l,I_u,s_-\gets\text{GID\_init}(\mathcal{T})$
    \State$\alpha_-\gets 0.5$\Comment{Store last alpha}
    \State$\mathcal{A}\gets [\;]$\Comment{A list for all realized $\alpha$ thresholds}
    \State$s\gets 0$, $\Delta\gets 0$\Comment{Store intermediate supremum and $\Delta$ sum}
    \While{True}\Comment{Run until both index pointers reach respective end positions}
      \If{$-1 < I_l$ and $I_u < |\mathcal{T}|$}\Comment{Both pointers on the move}
        \State$d^-\gets\alpha_- - {\mathcal{T}[I_l]}_b$, %
        $d^+\gets\alpha_- - (1 - {\mathcal{T}[I_u]}_b)$%
        \Comment{Distance to next threshold}
        \If{$d^- < d^+$}\Comment{Lower threshold is nearer}
          \State$s\gets\max\{s, {\mathcal{T}[I_l]}_a\}$,
          $\alpha\gets{\mathcal{T}[I_l]}_b$, %
          $I_l\gets I_l - 1$\Comment{Move down}
        \ElsIf{$d^- > d^+$}\Comment{Upper threshold is nearer}
          \State$s\gets\max\{s, {\mathcal{T}[I_u]}_a\}$,
          $\alpha\gets{\mathcal{T}[I_u]}_b$, %
          $I_u\gets I_u + 1$\Comment{Move up}
        \Else\Comment{Both distances are the same; Move up and down}
          \State$s\gets\max\{s, {\mathcal{T}[I_l]}_a\}$,
          $\alpha\gets{\mathcal{T}[I_l]}_b$, %
          $I_l\gets I_l - 1$, $I_u\gets I_u + 1$
        \EndIf%
      \ElsIf{$-1 < I_l$ and $I_u = |T|$}\Comment{Lower pointer still moving}
        \State$s\gets\max\{s, {\mathcal{T}[I_l]}_a\}$,
        $\alpha\gets{\mathcal{T}[I_l]}_b$, %
        $I_l\gets I_l - 1$
      \ElsIf{$-1 = I_l$ and $I_u < |T|$}\Comment{Upper pointer still moving}
        \State$s\gets\max\{s, {\mathcal{T}[I_u]}_a\}$,
        $\alpha\gets 1 - {\mathcal{T}[I_u]}_b$, %
        $I_u\gets I_u + 1$
      \Else\Comment{Both pointers reached respective end positions}
        \State\textbf{break}
      \EndIf%
      \State$\mathcal{A}\gets\mathcal{A}+[{[\alpha, s]}]$%
      \Comment{Store realized threshold}
      \State$\Delta\gets\Delta + s_- \cdot (\alpha_- - \alpha)$%
      \Comment{Add next summand}
      \State$\alpha_-\gets\alpha$, $s_-\gets s$\Comment{Update last values}
    \EndWhile%
    \State$\Delta\gets\Delta + \alpha_-\cdot s_-$\Comment{Add last summand}
    \If{$|\mathcal{A}|=0$}\Comment{No realized thresholds}
    \State$\Delta_+\gets\Delta + 0.5$\Comment{Everything is possible}
    \Else%
    \State$\Delta_+\gets{\mathcal{T}[0]}_b\cdot(0.5-{\mathcal{T}[0]}_a)$%
    \Comment{Only rightmost block until first $\alpha$ is possible}
    \EndIf%
    \State$\partial_\Delta(\mathscr{D})\gets%
    \begin{cases}
      \inf & \text{if } \Delta = 0 \\
      \Delta^{-2} & \text{otherwise}
    \end{cases}$
    \State$\partial_\Delta^-(\mathscr{D})\gets\Delta_+^{-2}$%
    \Comment{$\Delta_+$ can never be zero}
    \State\textbf{return} $\partial_\Delta(\mathscr{D}),
    \partial_\Delta^-(\mathscr{D}), \mathcal{A}$\Comment{GID, lower bound
    and realized $\alpha$ thresholds}
    \EndProcedure%
  \end{algorithmic}
\end{algorithm}
\newpage
\section{Additional  plots}
We include additional plots for our empirical investigation. For every
data set we included a graph for the $\text{ObsDiam}$ function as it
changes with varying minimum support and the resulting upper and lower
bounds on the intrinsic dimension. The data sets \emph{kosarak,
  retail, pos} and \emph{webview-2} have very few concepts for most
minimum support values, thus their graphs look more sparse than the
others.
Given their size, the graphs are best read digitally.

\begin{multicols}{2}
  \begin{figure}[H]
    \begin{subfigure}[t]{0.25\textwidth}
      \scriptsize
      \includegraphics[width=\textwidth]{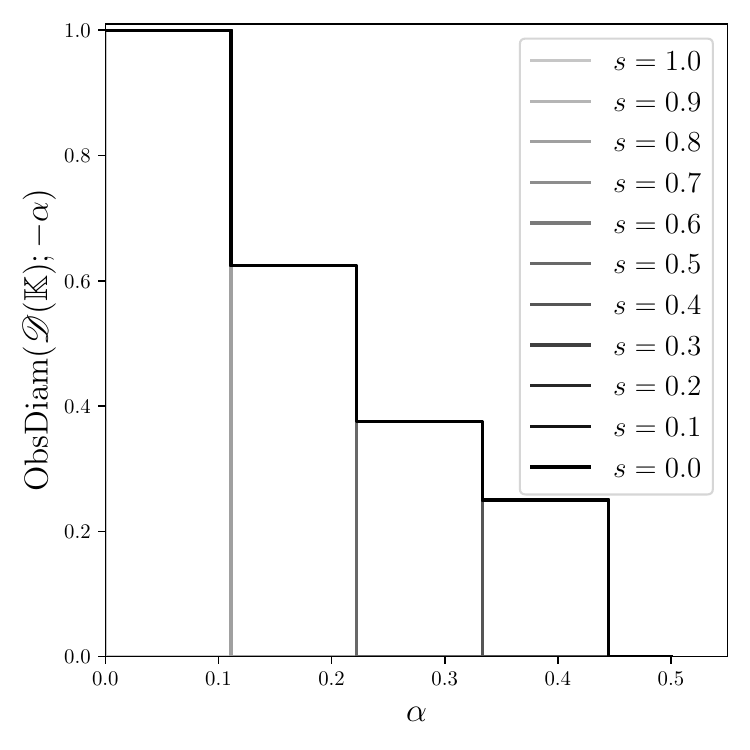}
      \captionsetup{width=.95\linewidth}
      \caption{Same as~\Cref{fig:chess_alphas}}%
      \label{fig:lbiw_alphas}
    \end{subfigure}%
    \begin{subfigure}[t]{0.25\textwidth}
      \scriptsize
      \includegraphics[width=\textwidth]{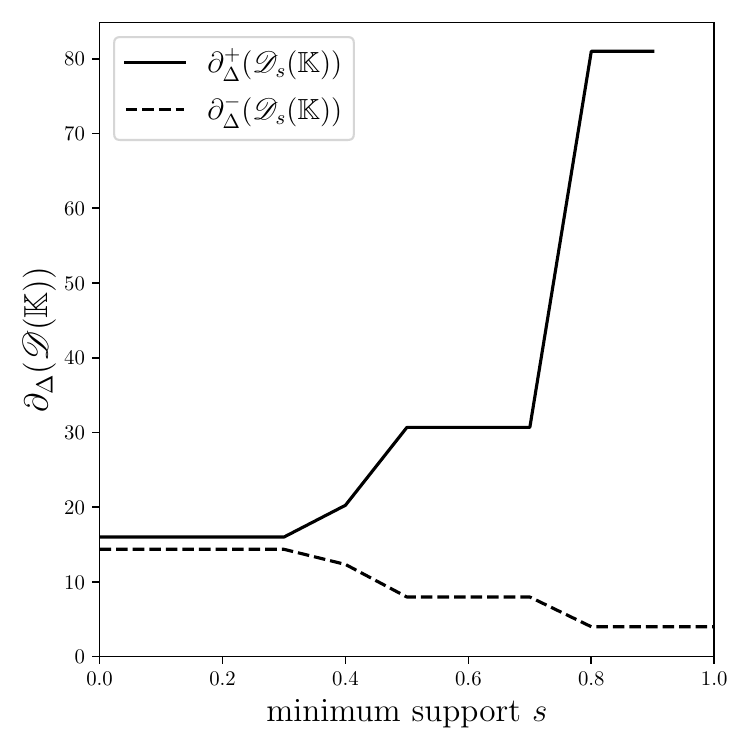}
      \captionsetup{width=.95\linewidth}
      \caption{Same as~\Cref{fig:chess_gid}}%
      \label{fig:lbiw_gid}
    \end{subfigure}
    \caption{Results of the computation for the \emph{living beings in water}
      data set.}%
    \label{fig:lbiw_result}
  \end{figure}
  \columnbreak%
  \begin{figure}[H]
    \begin{subfigure}[t]{0.25\textwidth}
      \scriptsize
      \includegraphics[width=\textwidth]{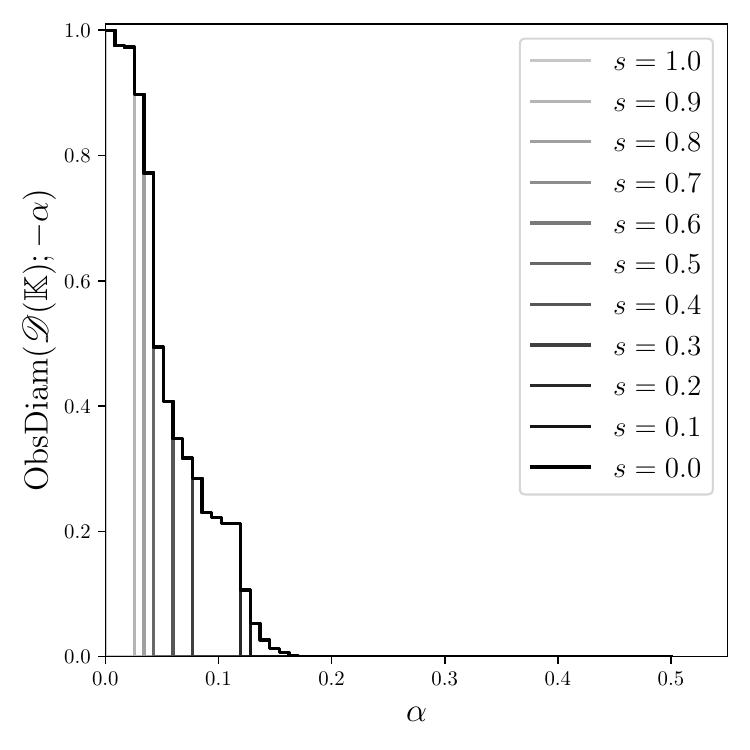}
      \captionsetup{width=.95\linewidth}
      \caption{Same as~\Cref{fig:chess_alphas}}%
      \label{fig:mushroom_alphas}
    \end{subfigure}%
    \begin{subfigure}[t]{0.25\textwidth}
      \scriptsize
      \includegraphics[width=\textwidth]{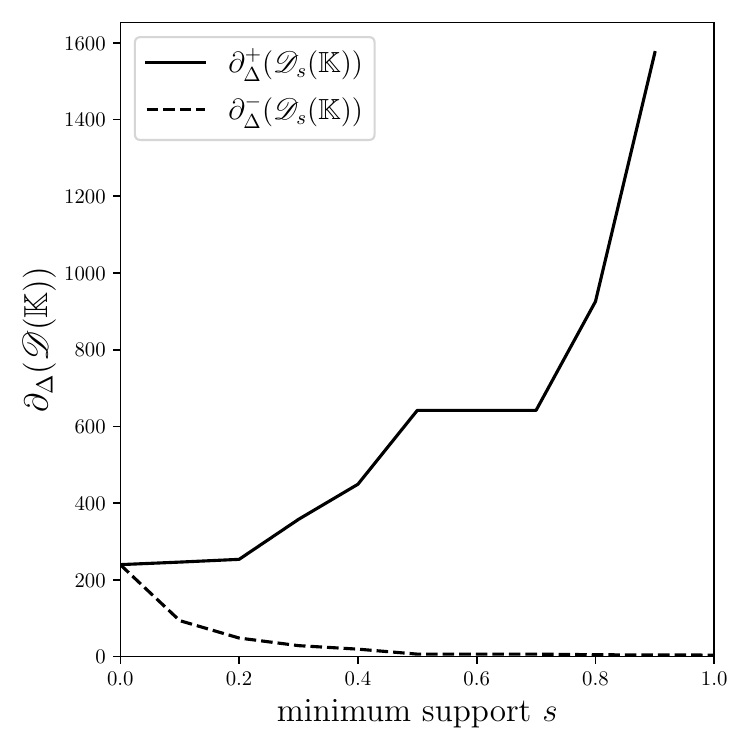}
      \captionsetup{width=.95\linewidth}
      \caption{Same as~\Cref{fig:chess_gid}}%
      \label{fig:mushroom_gid}
    \end{subfigure}
    \caption{Results of the computation for the \emph{mushroom} data set.}%
    \label{fig:mushroom_result}
  \end{figure}
\end{multicols}
\begin{multicols}{2}
  \begin{figure}[H]
    \begin{subfigure}[t]{0.25\textwidth}
      \scriptsize
      \includegraphics[width=\textwidth]{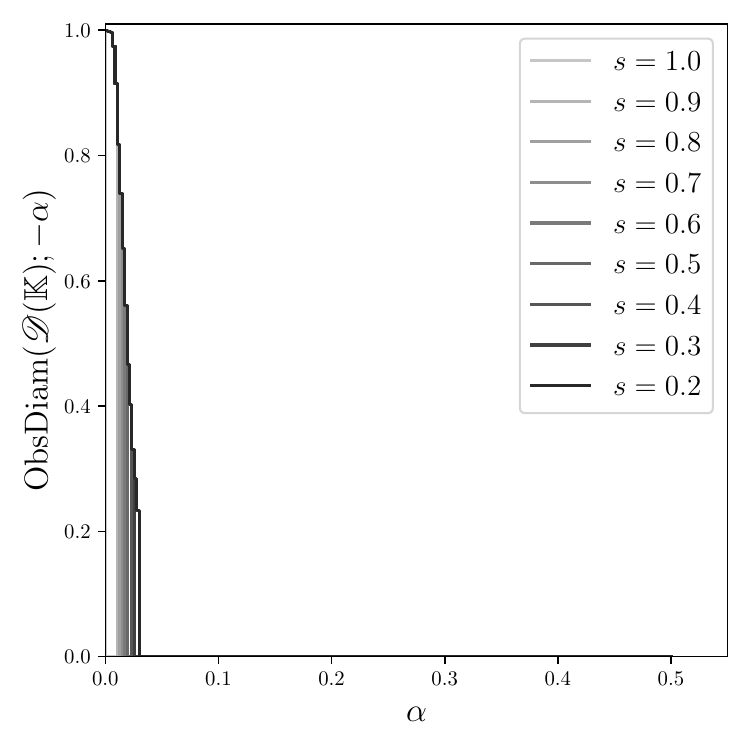}
      \captionsetup{width=.95\linewidth}
      \caption{Same as~\Cref{fig:chess_alphas}}%
      \label{fig:accidents_alphas}
    \end{subfigure}%
    \begin{subfigure}[t]{0.25\textwidth}
      \scriptsize
      \includegraphics[width=\textwidth]{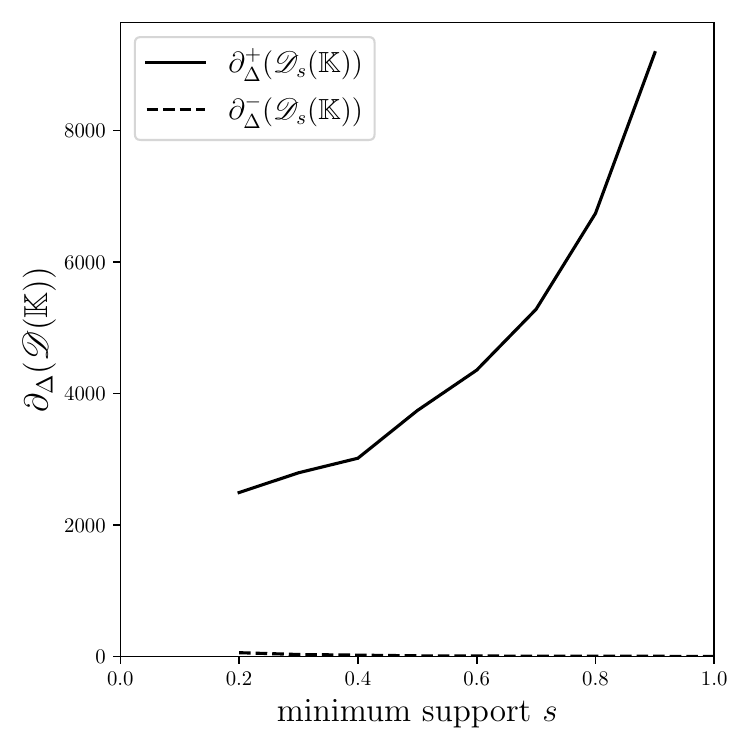}
      \captionsetup{width=.95\linewidth}
      \caption{Same as~\Cref{fig:chess_gid}}%
      \label{fig:accidents_gid}
    \end{subfigure}
    \caption{Results of the computation for the \emph{accidents} data set.}%
    \label{fig:accidents_result}
  \end{figure}
  \columnbreak%
  \begin{figure}[H]
    \begin{subfigure}[t]{0.25\textwidth}
      \scriptsize
      \includegraphics[width=\textwidth]{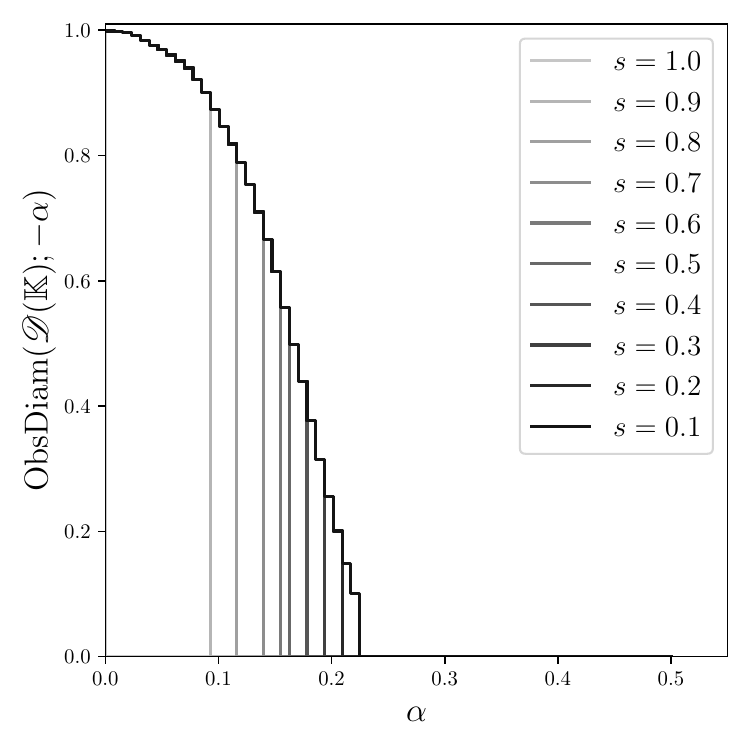}
      \captionsetup{width=.95\linewidth}
      \caption{Same as~\Cref{fig:chess_alphas}}%
      \label{fig:connect_alphas}
    \end{subfigure}%
    \begin{subfigure}[t]{0.25\textwidth}
      \scriptsize
      \includegraphics[width=\textwidth]{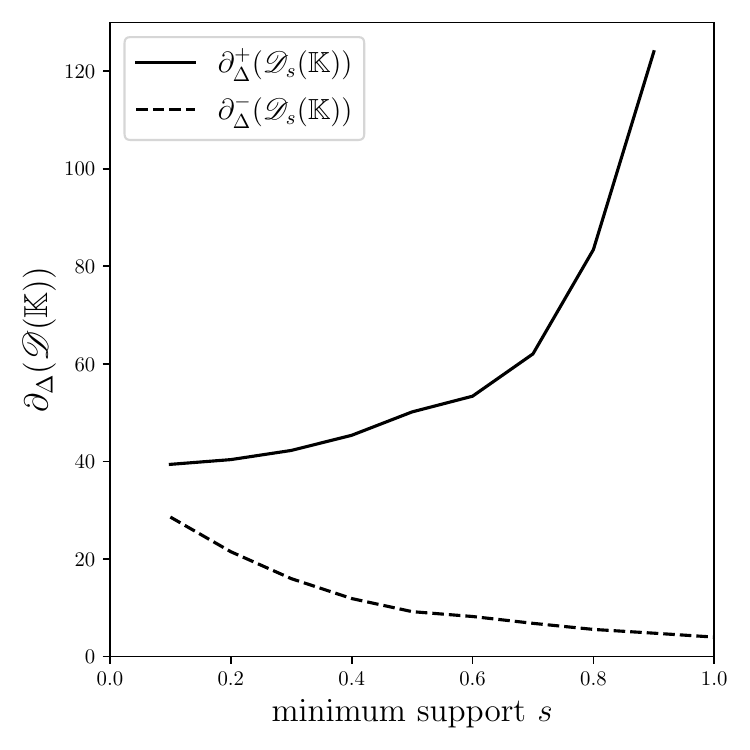}
      \captionsetup{width=.95\linewidth}
      \caption{Same as~\Cref{fig:chess_gid}}%
      \label{fig:connect_gid}
    \end{subfigure}
    \caption{Results of the computation for the \emph{connect} data set.}%
    \label{fig:connect_result}
  \end{figure}
\end{multicols}
\begin{multicols}{2}
  \begin{figure}[H]
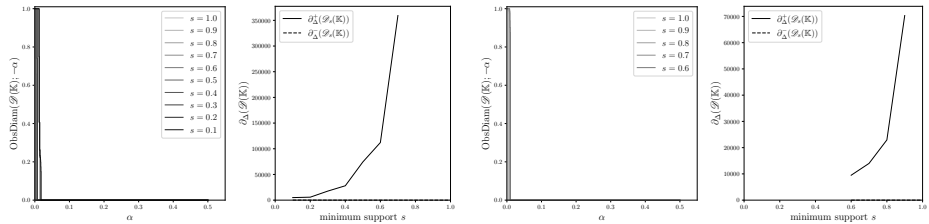

    \begin{subfigure}[t]{0.25\textwidth}
      \scriptsize
      \includegraphics[width=\textwidth]{figures/generated/alphas/pumsb\_star.pdf}
      \captionsetup{width=.95\linewidth}
      \caption{Same as~\Cref{fig:chess_alphas}}%
      \label{fig:pumsb_star_alphas}
    \end{subfigure}%
    \begin{subfigure}[t]{0.25\textwidth}
      \scriptsize
      \includegraphics[width=\textwidth]{figures/generated/gid/pumsb\_star.pdf}
      \captionsetup{width=.95\linewidth}
      \caption{Same as~\Cref{fig:chess_gid}}%
      \label{fig:pumsb_star_gid}
    \end{subfigure}
    \caption{Results of the computation for the \emph{pumsb\_star} data set.}%
    \label{fig:pumsb_star_result}
  \end{figure}
  \columnbreak%
  \begin{figure}[H]
    \begin{subfigure}[t]{0.25\textwidth}
      \scriptsize
      \includegraphics[width=\textwidth]{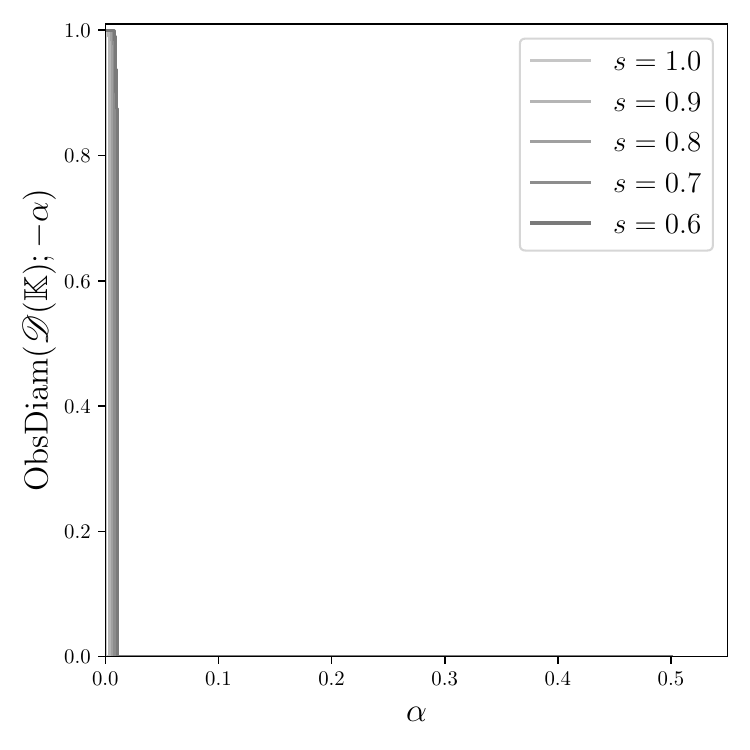}
      \captionsetup{width=.95\linewidth}
      \caption{Same as~\Cref{fig:chess_alphas}}%
      \label{fig:pumsb_alphas}
    \end{subfigure}%
    \begin{subfigure}[t]{0.25\textwidth}
      \scriptsize
      \includegraphics[width=\textwidth]{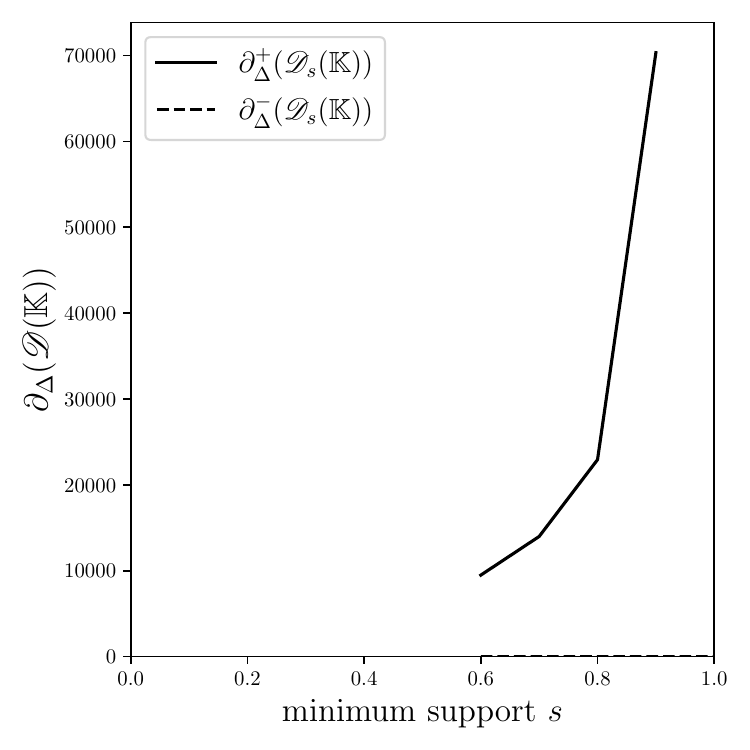}
      \captionsetup{width=.95\linewidth}
      \caption{Same as~\Cref{fig:chess_gid}}%
      \label{fig:pumsb_gid}
    \end{subfigure}
    \caption{Results of the computation for the \emph{pumsb} data set.}%
    \label{fig:pumsb_result}
  \end{figure}
\end{multicols}
\begin{figure}[H]
  \begin{subfigure}[t]{0.25\textwidth}
    \scriptsize
    \includegraphics[width=\textwidth]{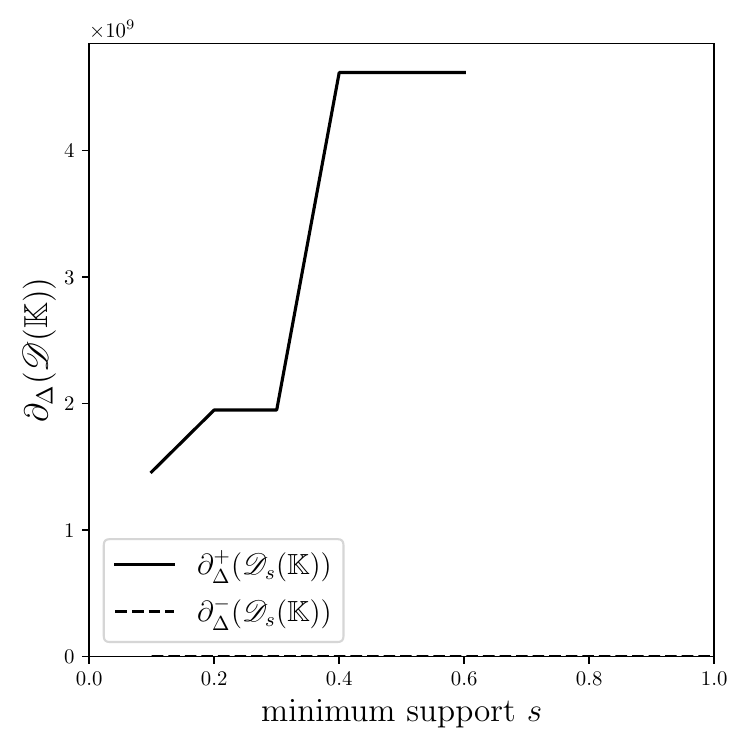}
    \captionsetup{width=.95\linewidth}
    \caption{Same as~\Cref{fig:chess_gid}}%
    \label{fig:kosarak_gid}
  \end{subfigure}%
  \begin{subfigure}[t]{0.25\textwidth}
    \scriptsize
    \includegraphics[width=\textwidth]{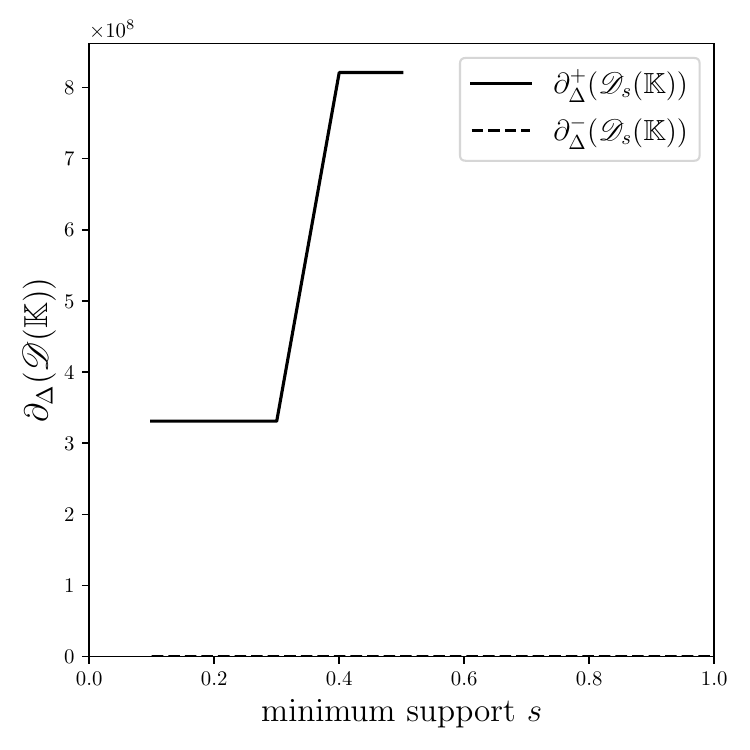}
    \captionsetup{width=.95\linewidth}
    \caption{Same as~\Cref{fig:chess_gid}}%
    \label{fig:pumsb_gid}
  \end{subfigure}
  \begin{subfigure}[t]{0.25\textwidth}
    \scriptsize
    \includegraphics[width=\textwidth]{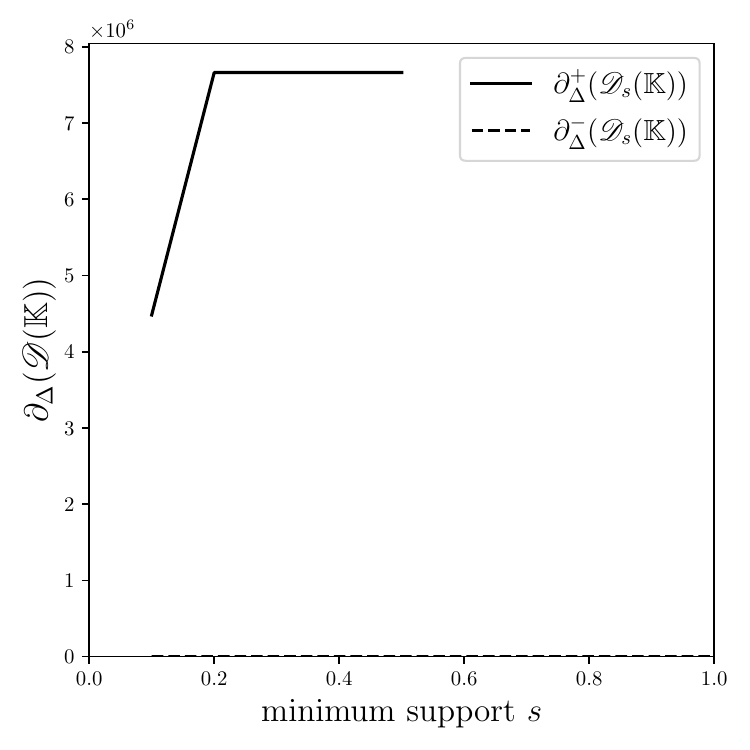}
    \captionsetup{width=.95\linewidth}
    \caption{Same as~\Cref{fig:chess_gid}}%
    \label{fig:retail_gid}
  \end{subfigure}%
  \begin{subfigure}[t]{0.25\textwidth}
    \scriptsize
    \includegraphics[width=\textwidth]{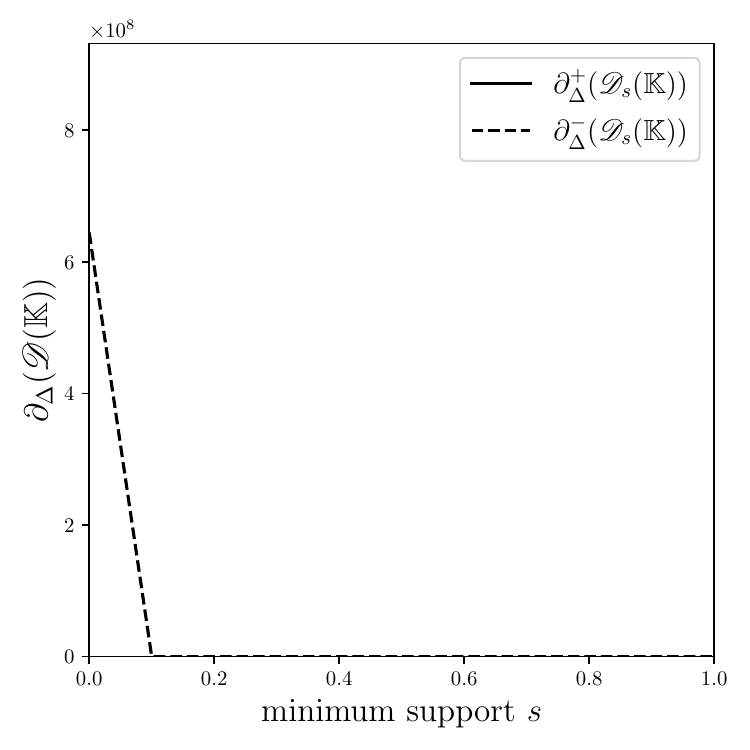}
    \captionsetup{width=.95\linewidth}
    \caption{Same as~\Cref{fig:chess_gid}}%
    \label{fig:pos_gid}
  \end{subfigure}
  \caption{Remaining results of the computation for the \emph{kosarak, retail,
      pos} and \emph{webview-2} data sets. Without diagram of the ObsDiam
    function, as there is hardly any difference to~\Cref{fig:pumsb_alphas}.}%
  \label{fig:remaining_result}
\end{figure}
\end{document}